\definecolor{darkgreen}{rgb}{0,0.6,0}
\newtheorem{problem}{Problem}
\newtheorem{theorem}{Theorem}
\newtheorem{remark}{Remark}
\newtheorem{definition}{Definition}
\newtheorem{assumption}{Assumption}
\definecolor{note}{rgb}{0.1,0.1,1}
\definecolor{rephase}{rgb}{0.15,0.7,0.15}
\definecolor{bag}{rgb}{0.6,0.6,0.2}
\renewcommand*\env@matrix[1][c]{\hskip -\arraycolsep
  \let\@ifnextchar\new@ifnextchar
  \array{*\c@MaxMatrixCols #1}}
\newcommand{\transpose}{\mathsf{T}}
\newcommand{\mathleft}{\@fleqntrue\@mathmargin0pt}
\newcommand{\mathcenter}{\@fleqnfalse}
\newcommand{\M}[1]{{\bm #1}} 
\newcommand{\ve}{\boldsymbol{e}}
\newcommand{\vv}{\boldsymbol{v}}
\newcommand{\GMKF}{MEM-KF\xspace}
\newcommand{\MCMED}{MED\xspace}
\newcommand{\GMKFlong}{Max Entropy Moment Kalman Filter\xspace}
\newcommand{\MCMEDlong}{Moment-Constrained Max-Entropy Distribution\xspace}
\title{\GMKFlong for Polynomial Systems with Arbitrary Noise}
\author{
Sangli Teng\thanks{University of Michigan, Ann Arbor, MI 48109, USA} \\ \texttt{sanglit@umich.edu} \And
Harry Zhang\thanks{Massachusetts Institute of Technology, Cambridge, MA, 02139, USA} \thanks{Equal contributions} \\ \texttt{harryz@mit.edu} \And
David Jin\footnotemark[2] \footnotemark[3] \\ \texttt{jindavid@mit.edu} \And
Ashkan Jasour \thanks{Team 347T-Robotic Aerial Mobility, Jet Propulsion Lab, Pasadena, CA, 91109, USA} \\ \texttt{jasour@jpl.caltech.edu} \And
Ram Vasudevan\footnotemark[1] \\ \texttt{ramv@umich.edu} \And
Maani Ghaffari\footnotemark[1]  \\ \texttt{maanigj@umich.edu} \And
Luca Carlone\footnotemark[2]  \\ \texttt{lcarlone@mit.edu}
}
\begin{document}

\maketitle

\begin{abstract}
    Designing optimal Bayes filters for nonlinear non-Gaussian systems is a challenging task. The main difficulties are: 1) representing complex beliefs, 2) handling non-Gaussian noise, and 3) marginalizing past states. To address these challenges, we focus on polynomial systems and propose the \emph{\GMKFlong} (\GMKF). To address 1), we represent arbitrary beliefs by a \MCMEDlong (\MCMED). The \MCMED can asymptotically approximate almost any distribution given an increasing number of moment constraints. To address 2), we model the noise in the process and observation model as \MCMED. To address 3), we propagate the moments through the process model and recover the distribution as \MCMED, thus avoiding symbolic integration, which is generally intractable. All the steps in \GMKF, including the extraction of a point estimate, can be solved via convex optimization. We showcase the \GMKF in challenging robotics tasks, such as localization with unknown data association. 
\end{abstract}

\section{Introduction}
The Kalman Filter (KF) is an optimal estimator for linear Gaussian systems in the sense that it provably computes a minimum-variance estimate of the system's state. From the perspective of the Bayes Filter, the KF recursively computes a Gaussian distribution that captures the mean and covariance of the state. 
However, the optimality of the KF breaks down when dealing with nonlinear systems or non-Gaussian noise. 
A plethora of extensions of the KF has been developed to deal with nonlinearity and non-Gaussianity.
For example, linearization-based methods, such as the Extended Kalman Filter (EKF)~\cite{song1992extended}, linearize the process and measurement models and propagate the covariance using a standard KF. Other techniques, such as the Unscented Kalman Filter (UKF)~\cite{wan2000unscented}, rely on deterministic sampling to better capture the nonlinearities of the system but then still fit a Gaussian distribution to the belief.
Unfortunately, these extensions do not enjoy the desirable theoretical properties of the KF: 
they are not guaranteed to produce an optimal estimate and
---in the nonlinear or non-Gaussian case--- their mean and covariance might be far from describing the actual belief distribution.  
Consider for example a system with non-Gaussian noise sampled from a discrete distribution over $\{-1, 1\}$; clearly, approximating this noise as a zero-mean Gaussian fails to capture the bimodal nature of the distribution. For this reason, existing nonlinear filters, such as the EKF and the UKF, can hardly capture the noise distribution precisely with Gaussian assumptions. 

\begin{figure}
    \centering
    \includegraphics[width=1\columnwidth]{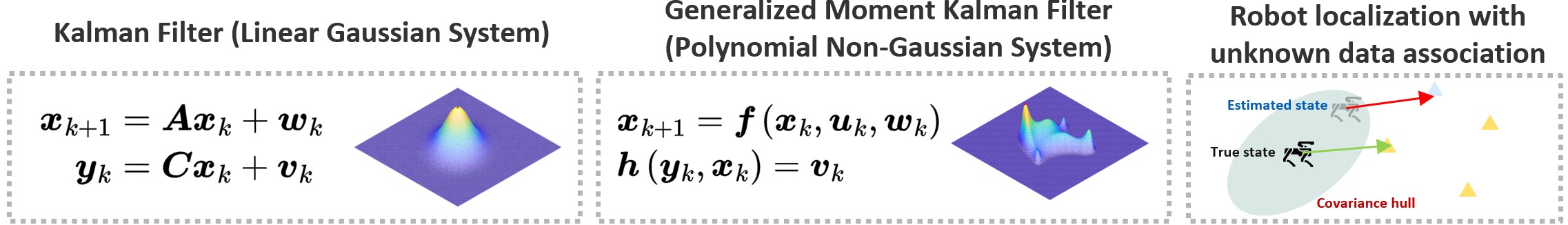}
    \caption{
    We generalize the classical Kalman Filter (KF) by proposing the  \GMKFlong (\GMKF), which operates on a polynomial system with arbitrary noise. In the Bayes Filter framework, the KF predicts and updates a Gaussian distribution that can be characterized by its mean and covariance. The \GMKF predicts and updates a max entropy distribution that can be uniquely characterized by its higher-order moments. The proposed algorithm is evaluated in a robot localization problem with unknown data association. In this case, the robot does not know which landmark it observes, which can be modeled by a multi-modal distribution.\label{fig:cover}} 
\end{figure}

Thus, a natural question is, \emph{how can we provide provably optimal state estimation for nonlinear systems corrupted by arbitrary noise?} When the noise is non-Gaussian, and the system is nonlinear, the estimation problem becomes less structured for analysis, and traditional methods only focus on the first two moments (i.e., mean and covariance) to describe the noise and the belief, which are too crude to capture the true underlying distributions. We argue that by accounting for higher-order moments, we can recover the state distribution more precisely. In this work, we focus on \emph{polynomial systems}, a broad class of potentially nonlinear systems described by polynomial equations (Fig.~\ref{fig:cover}). 
These systems can model dynamics on matrix Lie groups~\cite{teng2025optimization} and a broad set of measurement models~\cite{yang2022certifiably} for robot control~\cite{teng2022input, teng2024convex, 10301632, teng2021toward, liu2025discrete, teng2024generalized, Teng-RSS-23, ghaffari2022progress, teng2022lie, teng2022error, teng2025riemannian} and state estimation~\cite{teng2021legged, yu2023fully, he2024legged, teng2024gmkf}. To handle arbitrary noise, we let the user specify an arbitrary number
of moments to better describe the distribution of the process and measurement noise. 



{\bf Contribution.} 
We propose the \emph{\GMKFlong} (\GMKF) with three key contributions. Our first contribution is to formulate the state estimation problem as a Kalman-type filter using the \MCMEDlong (\MCMED) as the belief representation. The \MCMED can asymptotically approximate any moment determinate distribution. Given the moments, the coefficients of the \MCMED can be recovered by convex optimization. 

Our second contribution is to formulate the prediction and update step of the \GMKF and solve them via convex optimization. In the prediction step, we propagate the moments through the process model and recover the belief as \MCMED. In the update step, we formulate the sensor model as \MCMED and show how to obtain the posterior distribution.

Our third contribution is to extract the optimal point estimate via convex optimization. Due to the polynomial nature of the \MCMED, we apply semidefinite moment relaxation techniques to extract the optimal point estimate given the belief representation. The global optimality of the solution (resp. a non-trivial suboptimality bound) can be certified by using a rank condition (resp. the relaxation gap) of the moment relaxation. We showcase the resulting \GMKF in challenging robotics tasks, such as localization with unknown data association.

 Compared to the traditional marginalization step in the Bayes Filter that requires symbolic integrations and is generally intractable, the proposed method only requires numerical integrations to evaluate the gradients for the convex optimization. Although the numerical integration is still relatively expensive, \GMKF 
 provides a general solution for filtering in polynomial systems with arbitrary noise.

\section{Related Work}
\label{sec:relatedWork}

{\bf Bayes Filter.} The Bayes filter recursively estimates the posterior probability of a system’s state by predicting the prior from the process model and updating it with new measurements using Bayes' rule \cite{chen2003bayesian}. For a nonlinear process model, even a simple distribution can be skewed and transformed to a multi-model distribution that is hard to approximate. As a solution, the prior can be approximated by parametric functions, such as a Gaussian mixture \cite{sorenson1971recursive, alspach2003nonlinear}. Despite that, the belief can be asymptotically approximated with larger Gaussian sums, the prediction still assumes linear process models \cite{sorenson1971recursive} or relies on linearization \cite{alspach2003nonlinear}. Other approximation of belief includes point mass \cite{bucy1969bayes, bucy1971digital} and piece-wise constant \cite{kitagawa1987non, kramer1988recursive} that are nonsmooth.  Contrary to the deterministic method that approximates the prior by parametric functions, sampling-based methods such as particle filters \cite{djuric2003particle} have been applied to approximate the predicted distribution. Though the sampling-based methods are more versatile for nonlinear non-Gaussian systems, these methods generally lack formal guarantees. We recommend the readers to \cite{chen2003bayesian} for a more comprehensive review. 

{\bf Kalman Filtering.} 
When the process and measurement models are linear, and the noises are Gaussian, the Kalman Filter \cite{kalman1960new} becomes provably optimal and returns the minimum-variance estimate among all estimators. To deal with nonlinear systems, the Extended Kalman Filter (EKF) linearizes the process and measurement models around the estimated states \cite{thrun2002probabilistic} and propagates the covariance through the linearized system. However, the performance of EKF deteriorates as the linearization points deviate from the true states, as the process model becomes more nonlinear and the noise increases \cite{huang2010observability, song1992extended}. To tackle this problem, symmetry-preserving linearization has been applied in homogeneous space \cite{barrau2016invariant, barrau2018invariant, van2022equivariant} that can avoid the inconsistency caused by imperfect state estimations. As an alternative to linearizations, the \emph{Unscented Kalman Filter (UKF)} \cite{wan2000unscented} applies a deterministic sampling approach to handle nonlinearities. However, for non-Gaussian noise, UKF lacks a systematic way to handle arbitrary noise. We note that both EKF and UKF adopt the Kalman gain and only use the mean and covariance of the noise distribution. \citet{shimizu2023moment} and \citet{jasour2021moment} are closer to our work, where moment propagation is derived for filtering problems by assuming knowledge of the characteristic function of the Gaussian noise. In our methods, we only utilize moment information instead of the entire distribution, e.g., the characteristic function, which is inaccessible in typical applications. Though \cite{teng2024gmkf} utilized higher-order moments to derive a minimal variance filter, the prediction applies approximations that is not exact.

{\bf Semidefinite Relaxation}
Semidefinite relaxations have been critical to designing certifiable algorithms for robot state estimation. These relaxations transform polynomial optimization problems into semidefinite programs (SDPs) that are convex, using Lasserre's hierarchy of moment relaxations \cite{lasserre2001global}. 
Subsequently, SDP relaxation-based certifiable algorithms have been extended to various geometric perception problems, including pose graph optimization \cite{carlone2016planar, rosen2019se}, rotation averaging \cite{fredriksson2012simultaneous, eriksson2019rotation}, triangulation \cite{aholt2012qcqp, cifuentes2021convex}, 3D registration \cite{briales2017convex, chaudhury2015global, maron2016point, iglesias2020global}, absolute pose estimation \cite{agostinho2023cvxpnpl}, and category-level object perception \cite{shi2021optimal, yang2020perfect}. Compared to belief, e.g., \cite{sorenson1971recursive, alspach2003nonlinear, bucy1969bayes, bucy1971digital, kitagawa1987non, kramer1988recursive} that are hard to extract or certify the globally optimal point estimation, the optimal point estimation of \MCMED can be solved by SDPs. 




\section{Preliminaries}
\label{sec:preliminaries}

We review key concepts in polynomial systems. Let $\mathbb{R}[\M x]$ be the ring of polynomials with real coefficients, where $\M x:= \begin{bmatrix}x_1, x_2, \dots, x_n  \end{bmatrix}^\transpose$. 
Given an integer $r $, we define the set $ \mathbb{N}_r^n := \{ \alpha \in \mathbb{N}^n | \sum_i \alpha_i \le r \}$ (i.e., $\alpha$ is a vector of integers that sums up to $r$). 
A monomial of degree up to $r$ is denoted as $\M x^{\alpha}:=x_1^{\alpha_1}x_2^{\alpha_2}\dots x_n^{\alpha_n}$, $ \alpha \in \mathbb{N}_r^n$.  
For a polynomial $p(\M x):=\sum_\gamma c_{\gamma}\M x^{\gamma} $, its degree $\deg p$ is defined as the largest $\|\gamma_i\|_1$ with $c_{\gamma} \neq 0$. 

Let $s(r, n):= |\mathbb{N}_r^n| = \begin{pmatrix}
    n+r \\ 
    n
\end{pmatrix}$. We then define the basis function $\boldsymbol{\phi}_r(\M x): \mathbb{R}^{n} \rightarrow \mathbb{R}^{ s(r, n)}$ using all the entries of the canonical basis ${\M \phi}_r(\M x)= \begin{bmatrix}
        1, x_1, x_2,\dots,x_n, x_1^2,x_1x_2,\dots,x_n^2,x_1^r,\dots,x_n^r
        \end{bmatrix}^{\transpose}$. 


Given a probability distribution $p(\M x)$ over support $\mathcal{K}$, we can compute the moment matrix of probability $p(\M x)$ as $\M M_r(\bar{\M x}_{\alpha}) = \int_{\mathcal{K}} \M M_r(\M x)p(\M x)d\M x, \alpha \in \mathbb{N}^n_{2r}$,
where we have $\bar{\M x}_\alpha:= \int_{\mathcal{K}} {\M x}^\alpha d\M x $ to denote the moment of $\M x^{\alpha}$. The inverse of this process is the moment problem that tries to identify the distribution given the moment sequence. A special case is the moment-determinate distribution that describes a wide range of well-behaved probabilities that are neither long-tailed nor have higher-order moments growing too fast:
\begin{definition}[Moment-determinate distribution]
\label{label:moment-determinate}
    A distribution $p(\M x)$ is said to be moment-determinate if it can be uniquely determined by all the moments with order from $0$ to $\infty$.\footnote{With finite order $r$, there can be an infinite number of distributions with the specified moments. }
\end{definition}
We introduce the following equality-constrained Polynomial Optimization Problem \eqref{prob:pop} that will be applied to extract the optimal point estimation given an \MCMED:
\begin{definition}[Equality-constrained POP]
An equality-constrained polynomial optimization problem (POP) is an optimization problem with polynomial cost and constraints $c(\M x), g(\M x) \in \mathbb{R}[\M x]$:
\begin{equation}
\textstyle
\label{prob:pop}
\begin{aligned}
    c^*:= \inf_{{\M{x}}} c(\M x) \quad 
          \mathrm{s.t.} \quad g_j(\M x) = 0, \quad \forall j \in \{1,\ldots,m\}, \\
\end{aligned}\tag{POP}
\end{equation}
\end{definition}
The moment relaxation that solve \eqref{prob:pop} via SDPs is presented in Appendix~\ref{appx:MOM-POP}. Some introductory examples of the moment matrix and polynomial systems are provided in Appendix~\ref{apx:aux_mat} and \ref{apx:B_mat_example}. In the perception problems, such relaxed SDPs usually retrieve globally optimal solutions to the original \eqref{prob:pop}, meaning that the relaxation is often exact~\cite{cifuentes2020local, cifuentes2021convex, cifuentes2020geometry}. 

\section{Problem Formulation}
\label{sec:formulation}

We consider the following polynomial dynamical system: 
    \begin{equation}
    \label{eq:polyDynSys}
\left\{
	\begin{array}{l}
    \M{x}_{k+1} = \M{f}(\M{x}_k, \M{u}_k, \M{w}_k) \\ 
    \M h (\M{y}_k, \M x_k) = \M{v}_k 
	\end{array},  \qquad \M x_k \in \mathcal{K},\;\;\forall \; k,
\right.
\end{equation}
where $\M x_k $ is the deterministic state to estimate, $\M y_k$ are the measurements, and $\M u_k$ is the control input, all defined at discrete time $k$. 
We assume that the state $\M x_k$ is restricted to the domain $\mathcal{K}$, e.g., the set of 2D poses.
Both the process model $\M f$ and the observation model $\M h$ are vector-valued real polynomials.
We take the standard assumption that the process noise $\M w_k$ and measurement noise $\M v_k$ are identically and independently distributed across time steps. For polynomial systems, we make the following assumption:

\begin{assumption}[$\mathcal{K}$] \label{ass:domain}
The domain $\mathcal{K}$ is either the Euclidean space $\mathcal{K}=\mathbb{R}^{n}$ (i.e., the state is unconstrained), or
can be described by polynomial equality constraints $\mathcal{K}=\{ \M x | \M{g}(\M x) = 0 \in \mathbb{R}^m\}$.
\end{assumption}

\Cref{ass:domain} is relatively mild and captures a broad set of robotics problems where the variables belong to semi-algebraic sets (e.g., rotations, poses); see, for instance~\cite{carlone2022estimation, Teng-RSS-23}. Given the system in eq.~\eqref{eq:polyDynSys}, we formally define the filtering problem that recursively estimates the belief $p(\M x)$:
\begin{problem}[Recursive State Estimation]
\label{prob:bayes-filter} Assume the state at time $k+1$ is solely dependent on the previous state $\M x_k$, action $\M u_k$, and measurement $\M y_k$, we have the \textbf{process} and \textbf{observation} model as:
\begin{equation}
\textstyle
    p(\M x_{k+1}| \M x_{k}, \M u_k, \M x_{k-1}, \cdots, \M x_{0}) = p(\M x_{k+1} | \M x_{k}, \M u_k), 
     \;\;\;\; \;\;\;\;p(\M y_{k}| \M x_{k},
   \M y_k, \cdots, \M x_{0}) = p(\M y_{k} | \M x_{k}).
\end{equation}
Our goal is to recursively estimate the belief $p(\M x_k):= p(\M x_k | \M u_k, \cdots, \M u_0, \M y_k, \cdots, \M y_0 ) $ for the system in \eqref{eq:polyDynSys} dependent on all the information from process and observation models until time $k$. 
\end{problem}
In the Bayes Filter, we apply the \textbf{prediction} and \textbf{update} steps to fuse the dynamics and measurements:
\begin{align}
\textstyle
    p^-(\M x_{k+1}) &\propto \int_{\mathcal{K}} p(\M x_{k+1} | \M x_k, \M u_k) p(\M x_k) d\M x, \label{eq:prediction} \tag{Prediction} \\ 
    p(\M x_k) &\propto p(\M y_k | \M x_k )p^-(\M x_k ). \label{eq:update}\tag{Update}
\end{align}
As the system \eqref{eq:polyDynSys} is nonlinear and corrupted by arbitrary noise, solving \Cref{prob:bayes-filter} via Bayes filter is challenging in representing the belief $p(\M x)$ and the noise. When non-trivial noise belief is involved, marginalizing the distribution in the prediction steps is also generally intractable. In the following sections, we leverage the polynomial structure of the system to address these challenges. 

\section{\MCMEDlong}
In this section, we introduce the \MCMED and show how to recover the distribution given the moment constraints using convex optimization.
\subsection{Asymptotic Property of \MCMED}
Though a moment-determinate distribution can be uniquely determined by all its moments, for computational purposes, we expect 
to describe distributions with a finite number of moments. Given a finite number of moments, the underlying distribution is usually not unique. Therefore, to guarantee uniqueness, we define the notion of \emph{\MCMEDlong}, which is the distribution with maximum entropy among all the distributions that match the given moments.
We denote the set of distributions that match a set of given moments of order up to $r$ as $\mathcal{P}_r := \{ p(\M x) \ | \int_{\mathcal{K}} p_t(\M x)d\M x = \bar{\M x}_{\alpha}, \forall \alpha \in \mathbb{N}_r^n \},$
and its limit as $\mathcal{P}_\infty := \lim_{r\rightarrow\infty} \mathcal{P}_r$. 

Thus, the \MCMED can be obtained by maximizing the entropy functional over $\mathcal{P}_r$:
\begin{equation}
\textstyle
\label{eq:moment-max-ent-dist}
\begin{aligned}
p^*_r(\M x) := \arg \max_{p(\M x)} \quad -\int_{\mathcal{K}} p(\M x) \ln{p(\M x)}d\M x \quad 
\mathrm{s.t.}\quad p(\M x) \in \mathcal{P}_r. 
\end{aligned}\tag{$P_r$}
\end{equation}
We denote the solution to its limiting problem ($P_\infty$) with feasible set $\mathcal{P}_\infty$ as $p_\infty^*(\M x)$. For moment-determinate distribution in \Cref{label:moment-determinate}, $p^*_{\infty}(\M x)$ can be uniquely determined by $\mathcal{P}_{\infty}$. The asymptotic property of $p_r^*(\M x)$ can be summarized by:
\begin{theorem}[Asymptotic approximation of Moment-Determinate Distribution~\cite{borwein1991convergence}]
\label{theorem:asymptotic}
Suppose that $\mathcal{P}_\infty$ is defined by the moments of a moment determinate distribution, the solution $p^*_\infty(\M x)$ is unique and $p_r^*(\M x)$ converges to $p_\infty^*(\M x)$ in norm,\footnote{\cite{borwein1991convergence} considers the $1-$norm $\|p(\M x)\|_1 := \int_{\mathcal{K}} |p(\M x)| d\M x $.} i.e., $\lim_{r\rightarrow \infty}\|p^*_r(\M x) - p^*_\infty(\M x)\|_1 = 0$.
\end{theorem}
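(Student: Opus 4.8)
The plan is to follow the variational strategy behind the cited convergence result. Since the given moment sequence is that of a moment-determinate distribution, $\mathcal{P}_\infty$ contains exactly one density, so $p_\infty^*(\M x)$ is that density and uniqueness is immediate from \Cref{label:moment-determinate}; the real content is the $L^1$ convergence. First I would record that the feasible sets are nested, $\mathcal{P}_r \supseteq \mathcal{P}_{r+1} \supseteq \mathcal{P}_\infty$, because increasing $r$ only adds moment constraints. Writing the entropy as $H(p):=-\int_{\mathcal{K}} p(\M x)\ln p(\M x)\,d\M x$, this nesting gives $H(p_r^*) \ge H(p_{r+1}^*) \ge H(p_\infty^*)$, where the last inequality holds because $p_\infty^*\in\mathcal{P}_r$ is feasible for $(P_r)$ while $p_r^*$ is its maximizer. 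Hence $\{H(p_r^*)\}$ is non-increasing and bounded below, so it converges to some $H^\star \ge H(p_\infty^*)$.

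The crux is an identity that turns the entropy gap into a Kullback--Leibler divergence. Each maximizer of $(P_r)$ lies in the exponential family $p_r^*(\M x)=\exp\!\big(\sum_{\alpha\in\mathbb{N}_r^n}\lambda_\alpha \M x^\alpha\big)$ dictated by its moment constraints. Because $p_\infty^*$ and $p_r^*$ share all moments up to order $r$, integrating $\ln p_r^*$ against either density gives the same value, so $\int_{\mathcal{K}} p_\infty^*\ln p_r^*\,d\M x = \int_{\mathcal{K}} p_r^*\ln p_r^*\,d\M x$. Substituting into $D(p_\infty^* \,\|\, p_r^*):=\int_{\mathcal{K}} p_\infty^*\ln(p_\infty^*/p_r^*)\,d\M x$ collapses it to $D(p_\infty^* \,\|\, p_r^*)=H(p_r^*)-H(p_\infty^*)$. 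By Pinsker's inequality, $\|p_r^*-p_\infty^*\|_1^2 \le 2\,D(p_\infty^* \,\|\, p_r^*)=2\big(H(p_r^*)-H(p_\infty^*)\big)$, so the entire theorem reduces to showing the entropy gap vanishes, i.e.\ $H^\star=H(p_\infty^*)$.

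To close the entropy gap I would run a weak-compactness argument. The common low-order moment constraints make the family $\{p_r^*\}$ tight, so by Prokhorov some subsequence converges weakly, $p_{r_k}^*\rightharpoonup q$. For each fixed multi-index $\alpha$, once $r_k\ge|\alpha|$ the moment $\bar{\M x}_\alpha$ is matched exactly, and controlling the tails (via uniform integrability inherited from the matched higher moments, or automatically when $\mathcal{K}$ is compact) lets me pass moments to the limit, so $q$ matches every moment and therefore $q=p_\infty^*$ by moment-determinacy. Weak lower semicontinuity of the convex functional $-H$ then yields $H(p_\infty^*)=H(q)\ge\limsup_k H(p_{r_k}^*)=H^\star$, which together with $H^\star\ge H(p_\infty^*)$ forces $H^\star=H(p_\infty^*)$. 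Feeding this back into the Pinsker bound gives $\|p_r^*-p_\infty^*\|_1\to 0$.

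I expect the main obstacle to be the limit-passing on moments: weak convergence alone does not transfer unbounded polynomial test functions, so the argument hinges on establishing uniform integrability of $\M x^\alpha$ under $p_{r_k}^*$ on a non-compact domain $\mathcal{K}=\mathbb{R}^n$. Justifying the exponential-family form of $p_r^*$, namely existence of finite Lagrange multipliers with $p_r^*$ normalizable, is a secondary technical point that must also be handled before the KL identity can be invoked.
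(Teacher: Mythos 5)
The paper does not actually prove this theorem: its ``proof'' is a one-line deferral to \cite{borwein1991convergence}, so the comparison is really between your reconstruction and the cited argument of Borwein--Lewis. Your outline is sound and shares that argument's skeleton---nestedness $\mathcal{P}_r \supseteq \mathcal{P}_{r+1} \supseteq \mathcal{P}_\infty$, monotone optimal entropies, weak compactness plus moment passage, and moment-determinacy to identify the weak limit as $p_\infty^*(\M x)$---but your finishing step is genuinely different. Borwein--Lewis convert convergence of entropy values into norm convergence through a Kadec--Klee (strong rotundity) property of the entropy functional: weak $L^1$ convergence together with convergence of entropies implies $\|p_r^* - p_\infty^*\|_1 \to 0$, with no structural assumption on the maximizer. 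You instead collapse the entropy gap into a divergence via the exponential-family Pythagorean identity $D(p_\infty^* \,\|\, p_r^*) = H(p_r^*) - H(p_\infty^*)$ and finish with Pinsker. Your route buys something the cited one does not: the explicit quantitative bound $\|p_r^* - p_\infty^*\|_1 \le \sqrt{2\bigl(H(p_r^*) - H(p_\infty^*)\bigr)}$, i.e., a rate in terms of the entropy gap; the price is that it leans on the precise form \eqref{eq:extremal} of the maximizer.

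That price is exactly where you should be more careful: dual attainment---existence of finite multipliers $\lambda_\alpha$ such that a density of the form \eqref{eq:extremal} is normalizable and matches the prescribed moments exactly---is not a ``secondary technical point'' on a non-compact domain. For $\mathcal{K} = \mathbb{R}^n$ and moments of order $\ge 4$ there are realizable moment vectors, even in the interior of the moment cone, for which the maximum-entropy problem has finite supremum but no maximizer of exponential form (Junk's non-attainment phenomenon in entropy-based moment closures); in that case your KL identity has nothing to apply to, whereas the Kadec--Klee route only needs a maximizer to exist. Since the paper itself assumes \eqref{eq:extremal} throughout, your proof is correct relative to the same implicit assumption, but it should be stated as a hypothesis, not deferred. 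Two smaller repairs: to pass the moment $\bar{\M x}_\alpha$ to the weak limit you need uniform integrability of $\M x^\alpha$ under $p_{r_k}^*$, which comes from the matched moment of order $2\alpha$, so you need $r_k \ge 2|\alpha|$ rather than $r_k \ge |\alpha|$; and the semicontinuity step is cleanest if you replace Prokhorov by Dunford--Pettis weak $L^1$ compactness (available from the uniform entropy and second-moment bounds), since $-H$ is convex and strongly lower semicontinuous there, while lower semicontinuity relative to weak convergence of measures against the infinite reference (Lebesgue) measure needs an extra decomposition, e.g., writing $-H(p)$ as relative entropy with respect to a Gaussian plus a second-moment term.
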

\begin{proof}
    The complete proof is shown in \cite{borwein1991convergence}.
\end{proof}
\begin{remark}
    We consider \MCMED as the belief representation for our filtering approach.
    The resulting approach avoids the integral in the~\eqref{eq:prediction} step ---which is generally intractable--- by directly operating over the moments, and then recovers the corresponding distribution via~\eqref{eq:moment-max-ent-dist}. 
    The approach guarantees convergence to the true belief in the limit, i.e. $r\rightarrow \infty$, thanks to~\Cref{theorem:asymptotic}.
\end{remark}

\subsection{Recover \MCMED via Convex Optimization}
Now we briefly review the structure of the solution of \eqref{eq:moment-max-ent-dist} and how we can retrieve it via convex optimization. The full derivation by \cite{mead1984maximum} in a normed linear space is summarized in Appendix~\ref{appx:mcmed-multiplier}-\ref{appx:potential-cvx}. 

{By the first-order optimality condition, we can show that the solution to \eqref{eq:moment-max-ent-dist} takes the following form ---with the coefficients $\lambda_{\alpha}$ to be determined:}
\begin{equation}
\textstyle
\label{eq:extremal}
    p(\M x) = \exp{( -\sum_{\alpha} \lambda_{\alpha}\M x^{\alpha} )}.
\end{equation}
Via the Legendre-Fenchel dual of \eqref{eq:moment-max-ent-dist}, the optimal coefficients can be obtained by minimizing the thermal dynamics potential $\Delta(\M \lambda)$ of $p(\M x)$ \cite{mead1984maximum}: 
\begin{equation}
\label{eq:opt-potential}
\textstyle
    \min_{\M \lambda} \quad \Delta(\M \lambda) := \int_{\mathcal{K}}\exp{(-\sum_{\alpha}\lambda_\alpha \M x^{\alpha} ) d\M x } + \sum_\alpha \lambda_{\alpha} \bar{\M x}_{\alpha}.
\end{equation}
We observe the gradients,
\begin{equation}
\textstyle
\label{eq:potential-gradients}
   \frac{\partial \Delta(\M \lambda)}{\partial \lambda_\beta} = -\int_{\mathcal{K}}\exp{(-\sum_{\alpha}\lambda_\alpha \M x^{\alpha} ) \M x^{\beta} d\M x } + \bar{\M x}_{\beta} \in \mathbb{R}^{|\mathbb{N}_r^n|}
\end{equation}
equal zero when $p(\M x)$ satisfies all the moment constraints. Thus, we minimize $\Delta(\M \lambda)$ to search $p(\M x)$ that meets the first-order optimality constraints. We can further verify the convexity of $\Delta(\M \lambda)$ by the Hessian $\M H(\M \lambda)_{\beta, \gamma} := \frac{\partial \Delta(\M \lambda)}{\partial \lambda_\beta \lambda_\gamma } = \int_{\mathcal{K}}\exp{(-\sum_{\alpha}\lambda_\alpha \M x^{\alpha} ) \M x^{\beta} \M x^{\gamma} d\M x } \in \mathbb{R}^{|\mathbb{N}_r^n| \times |\mathbb{N}_r^n|}$. We find that $\M H(\M \lambda)$ is exactly the moment matrix generated by $p(\M x)$, which is positive semidefinite as $p(\M x)$ is nonnegative. {We note that numerical integration is required to evaluate the Hessian and gradients. For an $n$-dimensional system with moments up to order $r$ are matched, computing the exact gradients and Hessians requires evaluating $|\mathbb{N}_r^n|$ and $|\mathbb{N}_{2r}^n|$ terms of moments, respectively.}

For nonempty $\mathcal{K}$ defined by $\M g(\M x)$ in \Cref{ass:domain}, additional linear constraints is needed in \eqref{eq:opt-potential} to consider the ideal generated by $g(\M x)$, i.e., $\mathcal{I}[\M g(\M x)] := \{ \sum_i q_i(\M x) g_i(\M x) | \forall q_i(\M x) \in \mathbb{R}[\M x] \}$. The detailed derivation is summarized in Appendix~\ref{appx:potential-quotient}. 

\section{Main \GMKF Algorithm}

In this section, we introduce the \GMKFlong (\GMKF) algorithm. In the prediction steps, instead of directly marginalizing the distribution, we predict the moments via the dynamics first, and then recover the \MCMED matching those moments. In the update steps, we formulate the observation model as \MCMED and update the parameters via Maximum A Posterior estimation.

\subsection{Extended Dynamical System }
To generate higher-order moments of the states to better approximate the belief, we need an extended version of system \eqref{eq:polyDynSys} that exposes these moments in the state-space model.
To achieve this goal, we apply $\M \phi_r(\cdot)$ to both sides of~\eqref{eq:polyDynSys} to generate monomials of degree up to $r$  for $\M x_k$, $\M w_k$ and $\M v_k$: 
\begin{equation}
    \label{eq:ext-polyDynSys}
\begin{array}{ccc}
     \begin{aligned}
    &\left\{
	\begin{array}{l}
	\M \phi_r\left(\M x_{k+1} \right) = \M \phi_r(\M f(\M x_{k}, \M u_k, \M w_k )) \\
	\M \phi_r\left(\M h( \M y_k, \M x_{k} )\right) = \M \phi_r(\M v_k) \\
	\end{array}
\right. \\
\end{aligned} & \xRightarrow[]{} & \begin{aligned}
    &\left\{
	\begin{array}{l}
	\M \phi_r(\M x_{k+1}) = \M A(\M w_k, \M u_k) \M \phi_s(\M x_{k}) \\
	\M{C}(\M{y}_k)\M \phi_q(\M{x})  = \M \phi_r(\M{v}_k)  \\
	\end{array}.
\right. \\
\end{aligned}
\end{array}
\end{equation}
Intuitively, ~\eqref{eq:ext-polyDynSys} combines the entries of the original process and measurement models into higher-order polynomials by taking powers and products of the entries of $\M f$ and $\M h$. Since each polynomial of degree up to $r$ can be written as a linear combination of the canonical basis $\M \phi_r(\M{x}), $
we can rewrite both the extended process or the observation models as linear functions of $\M{\phi}_r(\M{x})$.
Note that as the $\M \phi_s(\M x_{k})$ on the RHS of the dynamics may have different degrees as LHS due to the nonlinearity, thus we have the order possibly $r\neq s, r\neq q$. We also note that the first term of $\M \phi_r(\cdot)$ is always $1$ so we have the dynamics as homogeneous linear equalities. An example of the extended dynamical system is presented in Appendix~\ref{appx:momcond_linear}.

\subsection{Initialization}
{We denote the belief or parameters after the prediction step with superscript $(\cdot)^-$}. 
For a recursive estimator, we set the initial state as a \MCMED $p(\M x | \M \lambda_0) = \exp{(-\sum_{\alpha}\lambda_{0, \alpha} \M x^{\alpha})}$ with coefficients $\M \lambda_0$. 
 One typical choice is the Gaussian distribution, a max-entropy distribution with constraints on the mean and covariance. 
 
\subsection{Prediction Steps}
It is possible to formulate the prediction step as marginalizing the joint density between $\M x_k$ and $\M x_{k+1}$:
\begin{equation}
\textstyle
    \label{eq:margin}
    p^-(\M x_{k+1}) \propto \int_{\mathcal{K}} p(\M x_{k+1} | \M x_k, \M u_k)p(\M x_k | \M\lambda_{k}) d{\M x_k}. 
\end{equation}
However, the integration on the RHS is generally intractable for arbitrary distribution. Even if $p(\M x_{k+1} | \M x_k, \M u_k)$ and $p(\M x_k | \M\lambda_{k})$ are \MCMED, $p(\M x_{k+1})$ is not guaranteed to be \MCMED, that is, we can not claim \MCMED is a valid conjugate prior in general. 

To mitigate this issue, we instead use the extended dynamics to predict the moments for $p(\M x_{k+1})$ and then recover the distribution. Consider the extended dynamics and take the mean value on both sides:
\begin{equation}
\textstyle
\label{eq:ext-dyn}
    \M \phi_r(\M x_{k+1}) = \M A(\M w_k, \M u_k) \M \phi_s(\M x_{k}) \Rightarrow \mathbb{E}[\M \phi_r(\M x_{k+1})] = \mathbb{E}[\M A(\M w_k, \M u_k) \M \phi_s(\M x_{k})].
\end{equation}
Under the assumption that the noise distribution is independent of the state distribution, the expectation of the cross term of $\M x$ and $\M w$ can be separated, and the above equation can be simplified to:
\begin{equation}
    \bar{\M x}_{\alpha,k+1} = \M A(\bar{\M w}_{\gamma}, {\M u}_k) \bar{\M x}_{\beta,k}, 
\end{equation}
with the moment sequence specified by $\alpha \in \mathbb{N}_r^n, \beta \in \mathbb{N}_s^n$.
Given the moments sequence $\bar{\M x}_{\alpha,k+1}$, we can apply \eqref{eq:opt-potential} to recover the distribution represented by $\M \lambda^-_{k+1}$, i.e., $p(\M x_{k+1} | \M\lambda^-_{k+1})$. 
\begin{remark}
    According to \Cref{theorem:asymptotic}, $p(\M x_{k+1})$ converges to the true distribution in norm as $r\rightarrow \infty$, which, in this limiting case, marginalizes the distribution \eqref{eq:margin} without the symbolic integration. 
\end{remark}
\subsection{Update Steps}
In the update steps, we apply Maximum A Posterior estimation to update the belief. Consider the extended measurement model $\M{C}(\M{y})\M \phi_q(\M{x}) = \M \phi_r(\M{v})$ 
Given the moments of $\M\phi_r(\M{v})$, i.e., $\bar{\M v}_{\alpha}$, we model the noise distribution as \MCMED parameterized by $\M \mu$:
\begin{equation}
\textstyle
    p(\M v | {\M \mu}) = \exp{(-\sum_\alpha \mu_{\alpha} \M v^{\alpha} )}.
\end{equation}
We note that the noise $\M v$ relates to the observation model via $\M v = \M h(\M y, \M x)$. Thus, we have the likelihood function by substituting the $\M v$ with $\M h(\M y, \M x)$:
\begin{equation}
\label{eq:obs-model}
\begin{aligned}
\textstyle
    p(\M y | \M x, \M \nu) = \exp{(-\sum_\alpha \mu_{\alpha} \M h(\M y, \M x)^{\alpha} )} = \exp{(-\sum_\alpha \nu_{\alpha} \M x^{\alpha} )}
\end{aligned},
\end{equation}
where $\M \nu$ is the coefficients corresponding to the canonical basis $\M \phi_r(\M x)$. 
With a batch of $m$ measurements, and the prior distribution $p(\M x | \M \lambda^-)$, the posterior distribution of $\M x$ can be obtained by:
\begin{equation}
\textstyle
    p(\M x | \M\lambda) \propto p(\M x | \M \lambda^{-}) \prod_{i=1}^m p(\M y_i | \M x, \M \mu) = \exp{( -\sum_\alpha \lambda_{\alpha}\M x^{\alpha} ) },
\end{equation}
with $\M \lambda$ being the summation of the coefficients $\M \lambda = \M \lambda^- + \sum_i^m \M\nu_i$. To predict the belief of the next step, we normalize the likelihood function via numerical integration to compute the moments.
    
\begin{wrapfigure}{R}{0.5\textwidth}

\begin{minipage}{0.5\textwidth}
\vspace{-12mm}
\begin{algorithm}[H]
\caption{\GMKFlong}\label{alg:gmkf}
\begin{algorithmic}
\label{alg:general}
\footnotesize
\Require Initial distribution parameterized by $\M \lambda^+_0$, dynamics model $\M x_{k+1} = \M f(\M x_k, \M u_k, \M w_k)$, observation model $\M h(\M y, \M x) = \M v$. 

\State {\texttt{// Obtain the observation model}} 
\State $ \M \mu_{\beta} \xleftarrow{\eqref{eq:opt-potential}} \bar{\M v}_{\beta}  $

\For{time $k = 1, \ldots, N$}
    \State {\texttt{// Compute the moments at $k-1$}}
    \State $\M x_{\beta, k-1} \gets \int_{\mathcal{K}} p(\M x | \M\lambda_{k-1})\M x^{\beta}d\M x  $ 
    \State {\texttt{// Propagation of moments }}
    \State $\bar{\M x}_{\alpha,k} \gets \M A(\bar{\M w}_{\alpha}, {\M u}_{k-1}) \bar{\M x}_{\beta,k-1} + \M b(\bar{\M w}_{\alpha}, \M u_{k-1})$ 

    
    \State {\texttt{// Reconstruct the distribution}}
    \State $ \M\lambda^-_{k} \xleftarrow{\eqref{eq:opt-potential}} \bar{\M x}_{\alpha,k}  $

    \State {\texttt{// Update with measurements} }
    \State $\M \nu_k \xleftarrow{\eqref{eq:obs-model}} \M y_k, \M \mu_{\beta} \quad \M\lambda_{k} \gets \M\lambda^-_k + \M \nu_k $ 

    \State {\texttt{// Normalize the distribution}}
    \State $\lambda_{k, \alpha_0} \gets \lambda_{k, \alpha_0} - \ln{\int_{\mathcal{K}}p(\M x | \M \lambda_k) d\M x } $ 

    \State {\texttt{// Optimal point estimation}}
    \State $\M x_k^*, \M X^*_k \xleftarrow{\mathrm{\eqref{prob:pop_sdp}}} \min_{\M x \in \mathcal{K}}  -\ln{p(\M x | \M\lambda_k)}$
\EndFor \\
\end{algorithmic}
\end{algorithm}

\end{minipage}
\end{wrapfigure}

\subsection{Extraction of Optimal Point Estimation}


Given the belief at each step of \GMKF, we extract the optimal point estimation with the highest probability density by:
\begin{equation}
\textstyle
\label{eq:point-estimation}
    \max_{\M x \in \mathcal{K}}\ \ p(\M x|\M\lambda) \Leftrightarrow \min_{\M x \in \mathcal{K}}\ \ -\ln{p(\M x | \M\lambda)}. 
\end{equation}
Given \Cref{ass:domain} and \eqref{eq:extremal}, \eqref{eq:point-estimation} is a \eqref{prob:pop} with the polynomial cost function being the negative log-likelihood. 
Thus, we instead solve the semidefinite moment relaxation of \eqref{prob:pop} (see details in Appendix~\ref{appx:MOM-POP}) to extract the optimal point estimate that has the highest density value. Under the condition that the resulting SDP has a rank-1 solution, the optimal point estimate can be extracted and certified by the rank condition. We summarize \GMKF in \Cref{alg:general}. 
\begin{remark}[KF as a subset of \GMKF]
    The KF is a special case of \GMKF with linear process and measurement models, and the moments up to the second order are matched.
\end{remark}

\section{Numerical Experiments}
\subsection{Asymptotic Property of \MCMED}
We first showcase the asymptotic property of \MCMED in approximating complicated distributions. We apply a line search method to implement \eqref{eq:opt-potential} with the initial step size determined by exact first-order gradients \eqref{eq:potential-gradients} and the Hessian approximated by the BFGS algorithm. Although Newton's method is introduced in \cite{mead1984maximum}, we find obtaining the exact Hessian matrix extremely expensive for high-order systems. We adopt SPOTLESS \cite{tobenkin2013spotless} to process the polynomials and an adaptive Gaussian quadrature method \cite{johnson2018algorithm} for numerical integrations. We apply the MOSEK \cite{mosek} to solve the \eqref{prob:pop_sdp} to extract the point estimations. More results on nonempty $\mathcal{K}$ like $\mathrm{SE}(2)$ can be found in the Appendix. \ref{appx:prediction-only}. 

In the first case, we consider a distribution on $\mathbb{R}^2$ generated by the following sampling strategy:
\begin{equation}
\textstyle
\label{eq:trig-noise}
    \M v = \begin{bmatrix}
        \sin{w} + \epsilon_1\\
        \cos{\pi w} + \epsilon_2
    \end{bmatrix}
\end{equation}
where $w\sim U(0, \pi)$ is a uniform distribution between $0$ and $\pi$, and $\epsilon_1, \epsilon_ 2\sim \mathcal{N}(0, 0.01)$ is Gaussian noise. We draw $N=1e5$ samples from the distribution to compute mean values, an unbiased estimation of the underlying moments, i.e., $\hat{\M v}_{\alpha} =  \frac{1}{N} \sum^N_{k=1}\M v_{k}^\alpha, \alpha \in \mathbb{N}^n_r$. 

Then we recover the distribution on $\mathbb{R}^2$ as $p(\M x | \M \lambda)$. The samples and \MCMED with moment up to order $r$ are shown in \Cref{fig:asymptotic}. We observe that as more moments are matched, the distribution asymptotically approaches the sample distribution. The result is expected according to \Cref{theorem:asymptotic}. When $r = 2$, the \MCMED is the Gaussian distribution. Due to the nonlinear transformation, the samples are concentrated on the curve with higher curvatures, which starts emerging in $p(\M x | \M \lambda)$ when $r \ge 4$. We can further verify that the result of $r=12$ is sufficiently good by comparing it with the heat map of the samples. 
\begin{figure*}
    \centering
    \includegraphics[width=1\linewidth]{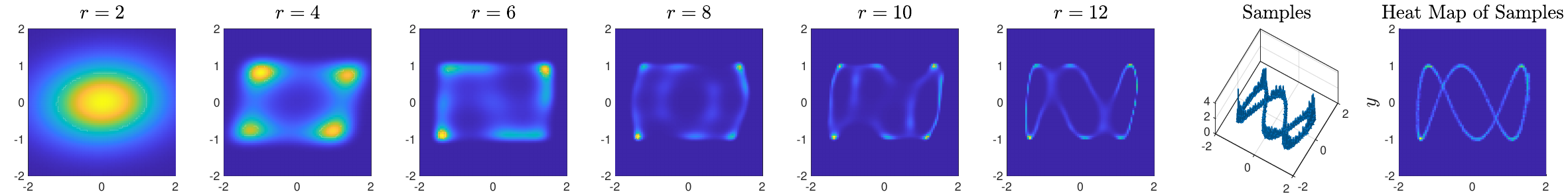}
    \caption{The \MCMED with moment matched up to order $r$. The heat map and samples are shown on the RHS of the plot. \MCMED asymptotically approaches the sample distribution as the order increases. }
    \label{fig:asymptotic}
\end{figure*}

\subsection{Update Steps}

Now we verify the update step of \GMKF. We consider a basic linear observation model:
\begin{equation}
\label{eq:lin-sys}
\textstyle
    \M y = \M x + \M v \in \mathbb{R}^2,
\end{equation}
where the moments of $\M v$ are known in advance. Then, we consider the non-Gaussian 
noise generated by the following nonlinear sampling strategy:
\begin{equation}
\textstyle
\label{eq:bin-noise}
    \M v = \begin{bmatrix}
        2q_1 - 1 + \epsilon_1 \\
        2q_2 - 1 + \epsilon_2
    \end{bmatrix}, q_1, q_2 \sim \mathrm{Bernoulli(0.5)}, \epsilon_1, \epsilon_2 \sim \mathcal{N}(0, 0.2\M I).
\end{equation}

We compare the proposed method with the Best Linear Unbiased Estimator (BLUE)\cite{humpherys2012fresh}, which is the minimum variance estimator using only mean and covariance. 
We compare the asymptotic property of the update steps with accumulated measurements. We accumulate $N$ measurements to obtain the negative log-likelihood, i.e., $-\ln{p(\M x | \M y_1, \M y_2, \cdots, \M y_N)} = \mathrm{Scalar} + \sum_{k=1}^N \sum_{\alpha} \mu_{\alpha} \M h(\M y_k, \M x)^{\alpha}$.

The normalized belief distribution is illustrated in \Cref{fig:update-recursive}. After receiving the first measurement, we notice that the belief computed by $r=4$ retains the multi-modality caused by the noise distribution and exhibits four different peaks. After more measurements are incorporated, the belief converges to the true state with dramatically smaller covariance. On the other hand, the BLUE ($r=2$) provides a unimodal (Gaussian) approximation of the belief and even with more data, it still exhibits a relatively large covariance around the true state.
The interested reader can also find results with noise \eqref{eq:trig-noise} are shown in \Cref{fig:update-trig} in the Appendix. \ref{appx:update-only} which exhibit similar patterns. 

We further sample $1000$ trials in each noise with different $N$ to compute the empirical variance of the point estimation extracted by \eqref{prob:pop_sdp}. The results are presented in \Cref{table:cov_trig}. We observe that with sufficiently many measurements accumulated ($N\ge5$), increasing $r$ consistently lowers the covariance and outperforms BLUE dramatically. With small $N$, the multi-modality of the belief introduces additional bias when computing the point estimation that makes the covariance higher than BLUE. However, we note that the point estimation is not a full belief representation and can not represent the multi-modality. The log-likelihood of the normalized density for the cases with $N\le 3$ is computed to show that increasing $r$ can provide point estimations with higher likelihood. 

\begin{table*}
\setlength\tabcolsep{2.5pt}
\centering
\tiny
\caption{Covariance of rank-1 point estimation with $N$ accumulated samples using noise model \eqref{eq:trig-noise} with moments up to order $r$ matched. The normalized log-likelihood of the belief, i.e., $\ln{p(\M x)}$, is shown in $(\cdot)$ for small $N$. We note that increasing $r$ consistently results in higher density.} 
\begin{tabular}{c|c|c|c|c|c|c|c|c|c}
\hline
&\diagbox[width=1.6cm,height=0.5cm]{$r=$}{$N=$} &  $1$& $2$ & $3$ & $5$ & $10$  & $20$ & $50$ & $100$ \\ \hline
\multirow{6}{*}{\eqref{eq:trig-noise}}&$2$ (BLUE) &  $1.583\ (-1.540)$& $0.805\ (-0.844)$ & $5.39e^{-1}\ (-0.439)$ & $3.26e^{-1}$ & $1.63e^{-1}$  & $7.84e^{-2}$ & $3.19e^{-2}$ & $1.61e^{-2}$   \\
&$4$ &  $3.379\ (-0.840)$& $1.581\ (0.034)$ & $7.71e^{-1}\ (0.665)$ & $1.87e^{-1}$ & $3.03e^{-2}$  & $1.014e^{-2}$ & $3.470e^{-3}$ & $1.71e^{-3}$  \\ 
&$6$ &  $4.482\ (-0.161)$& $1.800\ (0.885)$ & $6.96e^{-1}\ (1.645)$ & $1.59e^{-1}$ & $2.10e^{-2}$  & $3.54e^{-3}$ & $8.94e^{-4}$ & $4.27e^{-4}$ \\
&$6$ &  $4.521\ (0.892)$& $1.825\ (2.032)$ & $6.61e^{-1}\ (2.812)$ & $1.31e^{-1}$ & $7.09e^{-3}$  & $9.30e^{-4}$ & $2.72e^{-4}$ & $1.34e^{-4}$ \\
&$10$ & $4.044\ (1.260)$& $1.829\ (3.041)$ & $6.28e^{-1}\ (3.958)$ & $5.96e^{-2}$ & $1.06e^{-3}$  & $3.078e^{-4}$ & $1.07e^{-4}$ & $4.69e^{-5}$\\
&$12$ &  $2.410\ (1.720)$& $1.723\ (4.251)$ & $3.67e^{-1}\ (5.916)$ & $2.30e^{-2}$ & $3.78e^{-4}$  & $1.75e^{-4}$ & $7.12e^{-5}$ & $3.27e^{-5}$  \\ \hline
\multirow{2}{*}{\eqref{eq:bin-noise}}&$2$ (BLUE) &  $2.073\ (-1.877)$& $1.009\ (-1.184)$ & $7.06e^{-1}\ (-0.779)$ & $4.20e^{-1}$ & $2.08e^{-1}$  & $9.89e^{-2}$ & $4.07e^{-2}$ & $1.99e^{-2}$   \\
&$4$ &  $4.471\ (0.073)$& $2.051\ (1.420)$ & $1.141\ (2.219)$ & $2.72e^{-1}$ & $1.28e^{-2}$  & $4.59e^{-3}$ & $1.90e^{-3}$ & $9.36e^{-4}$  \\ \hline
\end{tabular}
\label{table:cov_trig}
\end{table*}

\begin{figure*}[t]
    \centering
    \includegraphics[width=1\linewidth]{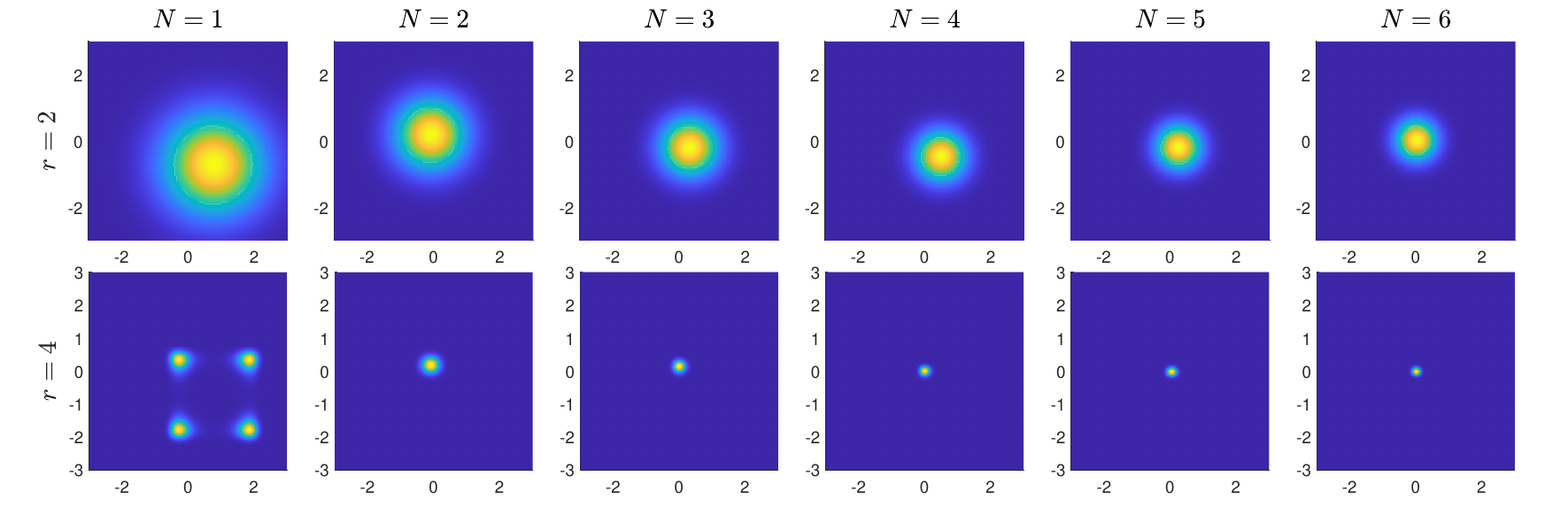}
    \caption{Applying the update step of \GMKF to the linear system corrupted by noise \eqref{eq:bin-noise}. With larger $r$, the posterior distribution converges faster to the truth states. The case for noise \eqref{eq:trig-noise} is presented in the Appendix. }
    \label{fig:update-recursive}
\end{figure*}

\subsection{Localization with Unknown Data Association on $\mathrm{SE}(2)$ }
\label{sec:localization}

Now we consider a localization problem where a robot with pose: 
\begin{equation}
\M Z := \begin{bmatrix}
    \M{R} & \M p \\
    0 &  1 \\
\end{bmatrix} \in \mathrm{SE}(2), \text{ including rotation } 
    \M R:=\begin{bmatrix}
        c & -s \\
        s &  c \\
    \end{bmatrix} \text{ and position }   \M p:=\begin{bmatrix}
        p_x \\
        p_y \\
    \end{bmatrix} \nonumber
\end{equation}
 moves in the plane while collecting range-and-bearing measurements.\footnote{Note that the unknown rotation has to satisfy the 
equality constraint $g(\M x) = c^2 + s^2 - 1 = 0$, due to the structure of the Special Orthogonal group  $\mathrm{SO}(2)$. } 
The problem is modeled by the following dynamical system:
\begin{equation}
\textstyle
    \label{eq:SE2DynSys}
\left\{
	\begin{array}{l}
    \M Z_{k+1} = \M Z_k \M U_k \M W_k \in \mathrm{SE}(2) \\
    \M y_k = \M R_k^{\transpose}(\M L_{ik} + \M v - \M p_k) \in \mathbb{R}^2
	\end{array},
\right.
\end{equation}
where the pose $\M Z$, input $\M U$, and noise $\M W$ all belong to $\mathrm{SE}(2)$. $\M U$ models the wheel odometry of a differential-drive wheel robot, and $\M W$ can be generated by the associated noise distribution in the Lie algebra \cite{long2013banana}. Due to the $\mathrm{SO}(2)$ group constraints on $\M R$, the system is nonlinear. We denote with $\M L_i \in \mathbb{R}^2$  the (known) position of landmark $i \in \mathcal{I}$ and with $\M L_{ik}$ the position of the landmark observed at time $k$, while $\M v$ is the measurement noise. 

In our case study, we consider the robot receiving observations from four landmarks with unknown data associations. Due to the unknown data associations, for given $\M y_k$, we do not know which landmark is actually measured, 
so the index $ik$ is unknown. The work \cite{zhang2023data} decides the data association between observations and landmarks by taking the $ik$ as variables. For a filtering problem, \cite{zhang2023data} is equivalent to choosing the most likely association and leads to an intractable estimation problem. To mitigate this issue, we assume each measurement to one of the  landmarks with equal probability $p_1 = p_2 = \cdots = p_N = \frac{1}{N}.$
Thus, such ``sensor model'' with the max entropy association strategy can be formulated as 
\begin{equation}
\textstyle
\label{eq:max-sensor}
    \M R_k\M y_k + \M p_k = \M v_L := \M L_{ik} + \M v, 
\end{equation}
where $p(\M L_{ik} = \M L_{i}) = \frac{1}{N}, \forall i \in \mathcal{I}$ and $\M v \in \mathcal{N}(0, \M \Sigma)$. The RHS can be modeled as a single noise term $\M v_L$ that can be approximated by \MCMED following the same procedure for \eqref{eq:bin-noise}. We consider $r=4$ in our case, which is sufficiently good to represent the four peaks caused by the landmarks.  For $r=2$, representing the four landmarks as a Gaussian distribution with extremely large covariance leads to little innovation in the update steps. 

We present the result of the proposed method and compare it with the baselines, including the EKF, UKF, and their variants on $\mathrm{SE}(2)$ manifolds, i.e., the InEKF \cite{barrau2016invariant} and UKFM \cite{brossard2017unscented}. We consider two scenarios, such that 1) the underlying association distribution is identical to the max entropy association, i.e., the measurement comes from four landmarks with the equal probability, and 2) the robot receives more measurements from the landmark on the lower-left corner. The results of the two cases are shown in \Cref{fig:gmkf-mismatch-noise}.  We can see that \GMKF consistently outperforms the baselines in both scenarios. 
The baselines in 2) have a clear bias in the trajectory, due to the fact that they greedily choose the most likely landmark as the originator of a measurement. 

The \GMKF is also compared with the particle filter in the Appendix. \ref{appx:pf-cmpr}. When sufficiently many particles are added, both methods can successfully estimate the trajectories with similar estimation error. However, we note that \GMKF is deterministic and provides a smooth belief representation that is analytical and convenient for the extraction of the globally optimal point estimation via convex optimization. Such a property is not possible for the particle filter that is based on random sampling. 


\begin{figure*}[t]
    \centering
    \includegraphics[width=1\linewidth]{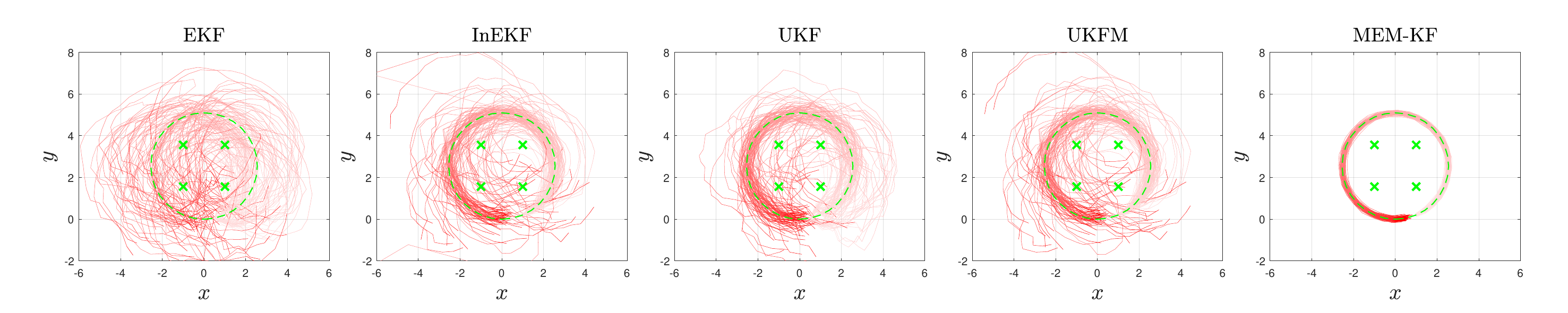}
    \includegraphics[width=1\linewidth]{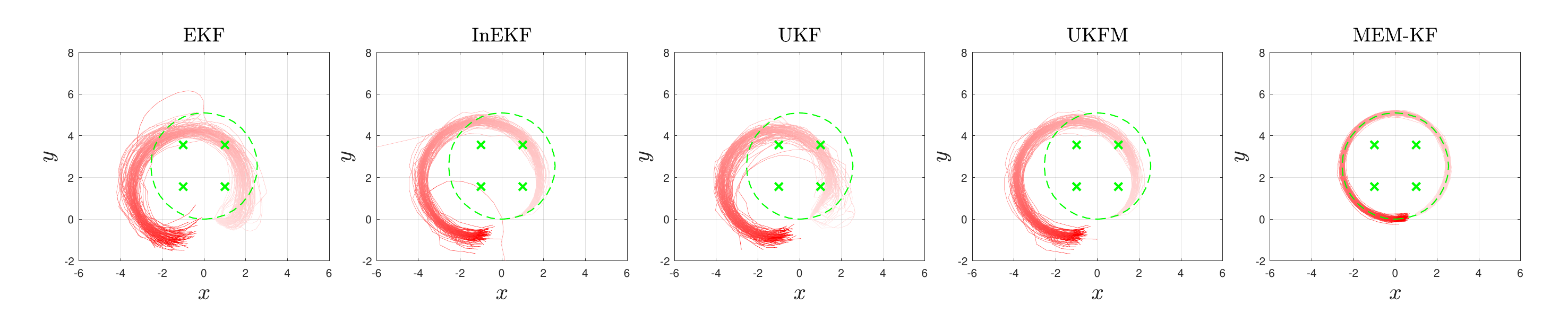}
    \caption{Localization with unknown data association. In the first row, the underlying real association follows the same distribution in \eqref{eq:max-sensor}. In the second row, the measurement comes more from the landmarks at the lower left corner. The performance of baselines degrades due to incorrect data associations, while \GMKF can consistently track the ground truth.}
    \label{fig:gmkf-mismatch-noise}
\end{figure*}






\section{Conclusions}
In this work, we presented the \GMKFlong (\GMKF) for optimal state estimation of polynomial systems corrupted by arbitrary noise. By the \MCMEDlong (\MCMED), we can asymptotically approximate almost any belief and noise distributions given an increasing number of moments of the underlying distributions. We then leverage the system dynamics to predict future moments to overcome the difficulties in the marginalization steps in the prediction steps. The update step is formulated as a Maximum A Posteriori estimation problem with measurements modeled as \MCMED. All the steps in \GMKF, as well as the extraction of the optimal point estimation, can be exactly solved by convex optimization. We showcase the \GMKF in challenging robotics tasks, such as localization with unknown data association. 

\textbf{Limitations and Future Work} 
The proposed method relies on numerical integration, which is computationally intensive for real-time deployment. The Gaussian-quadrature applied to evaluate the gradients \eqref{eq:potential-gradients} in \eqref{eq:opt-potential} still scales exponentially w.r.t the dimension of the systems. Future work will consider GPU-based numerical integration that scales better for high-dimensional systems and real-time deployment. 

\textbf{Societal Impact} This paper provides an optimal filtering framework that can asymptotically approximate any moment-determinate belief for non-Gaussian polynomial systems with smooth functions. This paper pushes the boundary of reliable and certifiable state estimations, especially for robotics.   
\clearpage


\begin{appendices}

\section{Polynomial Optimization and Semidefinite Relaxations}
\label{appx:MOM-POP}
In general, POPs are hard non-convex problems. In order to obtain a convex relaxation, 
we first rewrite the POP as a function of the moment matrix. 
The key idea here is that we can write a polynomial $c(\M x)$ of degree up to $2r$ as a linear function of the moment matrix $\M{M}_r(\M x)$, which contains all the monomials of degree up to $2r$:
\begin{equation}
    \label{eq:basis}
    c(\M x) = \operatorname{tr}(\M{C}\M{M}_r(\M x)) := \langle \M{C}, \M{M}_r(\M x) \rangle,
\end{equation}
where we observe that $\M{M}_r(\M x)$, by construction, is a positive semidefinite rank-1 matrix.
We then relax the problem by dropping the rank-one constraint on the moment matrix $\M{M}_r(\M x)$ while enforcing:
\begin{equation}
\begin{aligned}
    \M{X} \succeq 0, \M X_{1, 1} = 1.
\end{aligned}\tag{Semidefinite relaxation}
\end{equation}
Since the moment matrix $\M M_r(x)$ contains multiple repeated entries, we further add \emph{moment constraints} that enforce these repeated entries to be identical:
\begin{equation}
\begin{aligned}
    \langle \M B^{\perp}_i, \M X\rangle = 0. \\
\end{aligned}\tag{Moment constraints}
\end{equation}
Finally, we enforce the equality constraints, using \emph{localizing constraints} in the form: 
\begin{equation}
    \langle \M G_j, \M X \rangle = 0.
    \tag{Localizing constraints}
\end{equation}
These steps lead to the standard semidefinite moment relaxation of \eqref{prob:pop}, see~\cite{lasserre2001global, lasserre2015introduction}.
\begin{definition}[Moment Relaxation of~\eqref{prob:pop}~\cite{lasserre2001global, lasserre2015introduction}]
\begin{equation}
\label{prob:pop_sdp}
\begin{aligned}
    \hat{\rho}&:= \inf_{\M{X}} \langle \M{C}, \M{X} \rangle \\
    \mathrm{s.t.} \quad & \M{X} \succeq 0, \M{X}_{1,1} = 1\\
                  \quad & \langle \M{B}^{\perp}_{i}, \M{X}\rangle = 0, \langle \M{G}_{j}, \M{X}\rangle = 0 \quad \forall i, j.
\end{aligned}\tag{SDP}
\end{equation}
\end{definition}
If a rank-1 solution is obtained from \eqref{prob:pop_sdp}, its solution $\hat{\M X}$ is a valid moment matrix. In such a case, we can extract $\hat{\M X} = \M \phi_r(\hat{\M x})\M \phi^{\transpose}_r( \hat{\M x})$ from the first column of $\hat{\M X}$.

\section{Auxiliary Matrices }
\label{apx:aux_mat}
This appendix formally defines the localizing matrices $\{\M{B}_{\gamma}\}_\gamma$ \cite{lasserre2015introduction} that locate the monomials in a moment matrix and its orthogonal complement $\{\M{B}_{\gamma, \alpha}^{\perp}\}_{\gamma, \alpha}$.
These matrices are used in the definition of the moment constraints for the moment relaxations for \eqref{prob:pop}. 

We first introduce the matrix $\M B_{\gamma}$ which retrieves the monomial $\M{x}^{\gamma}$ from the moment matrix $\M{M}_r(\M x)$, i.e., $\langle \M B_{\gamma}, \M{M}_r(\M x) \rangle = \M{x}^{\gamma}$:
\begin{equation}
    \M{B}_{\gamma} := \frac{1}{C_{\gamma}} \sum_{\gamma = \alpha + \beta} \ve_{\alpha} \ve^{\transpose}_{\beta},\ \  C_{\gamma}:= { \left| \left\{ (\alpha, \beta) | \gamma = \alpha + \beta \right\}  \right|  },
\end{equation}
where $\ve_\alpha \in \mathbb{R}^{s(r, n) \times 1}$ is the canonical basis corresponding to the $n$-tuples {$\alpha \in \mathbb{N}_r^n$} and $C_{\gamma}$ is a normalizing factor. 
We note that the moment matrix can be written as:
\begin{equation}
    \M{M}_r(\M x) = \sum_{\gamma} C_\gamma \M{B}_{\gamma} \M x^{\gamma}, \quad \gamma \in \mathbb{N}^n_{2r}.
\end{equation}
Then we introduce $\M{B}^{\perp}_{\gamma, \alpha}$, which is used to enforce that repeated entries {(corresponding to the same monomial $\M{x}^{\gamma}$)} in the moment matrix are identical. We first define the symmetric matrix, for any given $\alpha, \beta \in \mathbb{N}_r^n$:
\begin{equation}
    \M{E}_{\alpha, \beta} = \ve_{\alpha}\ve_{\beta}^{\transpose} + \ve_{\beta}\ve_{\alpha}^{\transpose}.
\end{equation}
Then we construct  $\M{B}^{\perp}_{\gamma, \alpha}$, for some given $\gamma\in \mathbb{N}_{2r}^{n}$ and for $\alpha \in \mathbb{N}_r^n$ (with $\alpha < \gamma$) as:
\begin{equation}
   \begin{aligned}
       \M{B}^{\perp}_{\gamma, \alpha} = & { \M{E}_{\bar{\alpha}, \bar{\beta}} } - \M{E}_{{\alpha}, {\beta}}, \\ 
       \end{aligned}
\end{equation}
where $\beta$ is such that $\alpha + \beta = \gamma$ and $\beta \leq \alpha$, 
and where $\bar{\alpha}, \bar{\beta} \in \mathbb{N}_r^n$ are chosen such that $\bar{\alpha}+\bar{\beta}= \gamma$ 
and $\bar{\alpha}$ is the smallest vector (in the lexicographic sense) that satisfies $\bar{\alpha}+\bar{\beta}= \gamma$.
An example of matrices $\M B_{\gamma}$ and $\M{B}_{\gamma, \alpha}^{\perp}$ for a simple problem is given in Appendix~\ref{apx:B_mat_example}. 

\section{Examples of Moment Matrix and Constraints}
\label{apx:B_mat_example}
For $r = 2, n = 2$, we have the moment matrix:
\begin{equation*}
    {\M M}_{2}(\M x) =\left[\begin{array}{cccccc}
1 & x_1 & x_2 & x_1^2 & x_1x_2 & x_2^2 \\
x_1 & x_1^2 & x_1x_2 & x_1^3 & x^2_1x_2 & x_1x_2^2 \\
x_2 & x_1x_2 & x^2_2 & x_1^2x_2 & x_1x^2_2 & x_2^3 \\
x_1^2 & x^3_1 & x_1^2x_2 & x_1^4 & x^3_1x_2 & x_1^2x_2^2 \\
x_1x_2 & x_1^2x_2 & x_1x^2_2 & x_1^3x_2 & x_1^2x_2^2 & x_1x_2^3 \\
x_2^2 & x_1x_2^2 & x^3_2 & x_1^2x_2^2 & x_1x^3_2 & x_2^4 \\
\end{array}\right] .
\end{equation*} 
For $\gamma = [2, 0]$, which corresponds to $x_1^2$, we have:
\begin{equation}
    \M B_{\gamma} = \frac{1}{C_{\gamma}} \begin{bmatrix}
        0 & 0 & 0 & 1 & 0 & 0 \\
        0 & 1 & 0 & 0 & 0 & 0 \\
        0 & 0 & 0 & 0 & 0 & 0 \\
        1 & 0 & 0 & 0 & 0 & 0 \\
        0 & 0 & 0 & 0 & 0 & 0 \\
        0 & 0 & 0 & 0 & 0 & 0 \\
    \end{bmatrix}, C_{\gamma} = 3.
\end{equation}
Now let us compute $\M B^{\perp}_{\gamma, \alpha}$ for $\alpha = [1,0] \leq \gamma$.
We note that $\beta = \gamma - \alpha = [1,0]$ and we can choose $\bar{\alpha} = [0,0]$ and $\bar{\beta} = [2,0]$.
The corresponding vectors in the canonical basis are:
\begin{equation}
\begin{aligned}
    \M e_{{\alpha}} &= \begin{bmatrix}
        0 & 1 & 0 & 0 & 0 & 0
    \end{bmatrix}, \M e_{{\beta}} &= \begin{bmatrix}
        0 & 1 & 0 & 0 & 0 & 0
    \end{bmatrix}, \\
     \M e_{\bar{\alpha}} &= \begin{bmatrix}
        1 & 0 & 0 & 0 & 0 & 0
    \end{bmatrix}, {\M e_{\bar{\beta}} } &= \begin{bmatrix}
        0 & 0 & 0 & 1 & 0 & 0
    \end{bmatrix},
\end{aligned}
\end{equation}
from which we obtain:
\begin{equation}
\begin{aligned}
    \M B^{\perp}_{\gamma, \alpha} &=  {\M E_{\bar{\alpha}, \bar{\beta}} - \M E_{{\alpha}, {\beta}} }  =\M e_{\bar{\alpha}}\M e_{\bar{\beta}}^{\transpose} + \M e_{\bar{\beta}}\M e_{\bar{\alpha}}^{\transpose}  - (\M e_{\alpha}\M e_{\beta}^{\transpose} + \M e_{\beta}\M e_{\alpha}^{\transpose})  \\
    &= \begin{bmatrix}
        0 & 0 & 0 & +1 & 0 & 0 \\
        0 & -2 & 0 & 0 & 0 & 0 \\
        0 & 0 & 0 & 0 & 0 & 0 \\
        +1 & 0 & 0 & 0 & 0 & 0 \\
        0 & 0 & 0 & 0 & 0 & 0 \\
        0 & 0 & 0 & 0 & 0 & 0 \\
    \end{bmatrix}.
\end{aligned}
\end{equation}
By adding the constraint:
\begin{equation}
    \langle \M B^{\perp}_{\gamma, \alpha}, \M X \rangle = 0, \quad \M X \succeq 0,
\end{equation}
we enforce that the entries corresponding to $x_1^2$ are identical. 

\section{Derivation of \MCMED}
\label{appx:mcmed-multiplier}
Consider the optimization \eqref{eq:moment-max-ent-dist}, given the multiplier $\M \lambda \in \mathbb{R}^{|\mathbb{N}_r^n|} $, we have the Lagrangian:
\begin{equation}
\begin{aligned}
    \mathcal{L}(p(\M x), \M \lambda) = \int_{\mathcal{K}} p(\M x) \ln{p(\M x)}d\M x + \sum_{\alpha} \lambda_{\alpha} (\int_{\mathcal{K}} \M x^{\alpha}p(\M x)d\M x - \bar{\M x}_{\alpha}).
\end{aligned}
\end{equation}
For fixed $p(\M x)$, the dual problem to \eqref{eq:moment-max-ent-dist} becomes:
\begin{equation}
\label{eq:dual-max-ent}
    \max_{\M \lambda} \min_{p(\M x)} \quad \mathcal{L}(p(\M x), \M \lambda)
\end{equation}
Via taking the variation of $p(\M x)$ on $\mathcal{L}(p(\M x), \M\lambda)$, we have the first-order optimality condition:
\begin{equation}
\label{eq:stationary-max-entropy}
\begin{aligned}
    \nabla_{p(\M x)}\mathcal{L}(p(\M x), \M \lambda) &= p(\M x)\frac{1}{p(\M x)} + \ln{p(\M x)} + \sum_{\alpha} \lambda_{\alpha} \M x^{\alpha}, \\
\end{aligned}
\end{equation}
where $\nabla$ is the Fréchet derivative in a normed linear space defined on all possible distributions. For simplicity, we merge the constant $1$ produced by the entropy term and let the \eqref{eq:stationary-max-entropy} equal to zero. Then we have the stationary point \eqref{eq:extremal}: $$p(\M x) = \exp{( -\sum_{\alpha} \lambda_{\alpha}\M x^{\alpha} )}.$$ 

\section{Derivation of Thermal Dynamics Potential}
\label{appx:potential-cvx}
Now we introduce the thermal dynamics potential term that we can minimize to obtain the coefficients of \eqref{eq:extremal}. We mainly refer to the result in \cite{mead1984maximum} to introduce the thermal dynamics potential and the equivalent potential-like term for the optimization. Consider the integrand:
\begin{equation}
    \varphi(\M \lambda) = \int_{\mathcal{K}}\exp{(-\sum_{\alpha \in \mathbb{N}_r^n } \lambda_{\alpha} \M x^{\alpha})}d\M x,
\end{equation}
the normalized version of \eqref{eq:extremal} can be expressed as:
\begin{equation}
    p^*(\M x) = \frac{\exp{( -\sum_{\alpha} \lambda_{\alpha}\M x^{\alpha} )}}{\varphi(\M \lambda)}.
\end{equation}
By substituting the $p^*(\M x)$ to the inner minimization of $\min_{p(\M x)} \mathcal{L}(p(\M x), \M \lambda)$, 

we have the Legendre-Fenchel dual as
\begin{equation}
\begin{aligned}
    &\mathcal{L}(p^*(\M x), \M \lambda) \\
    =& \int_{\mathcal{K}} p^*(\M x) \ln{p^*(\M x)}d\M x  
    + \sum_{\alpha} \lambda_{\alpha} (\int_{\mathcal{K}} \M x^{\alpha}p^*(\M x)d\M x - \bar{\M x}_{\alpha})  \\
    =& \int_{\mathcal{K}} p^*(\M x) \ln{\frac{p(\M x)}{\varphi(\M \lambda)}}d\M x
    + \sum_{\alpha} \lambda_{\alpha} (\int_{\mathcal{K}} \M x^{\alpha}p^*(\M x)d\M x - \bar{\M x}_{\alpha}) \\
    =& \int_{\mathcal{K}} p^*(\M x) \ln{{p(\M x)}}d\M x
    + \sum_{\alpha} \lambda_{\alpha} \int_{\mathcal{K}} \M x^{\alpha}p^*(\M x)d\M x - \int_{\mathcal{K}} p^*(\M x) \ln{{\varphi(\M \lambda)}}d\M x - \sum_{\alpha} \lambda_{\alpha} \bar{\M x}_{\alpha} \\
    =&- \int_{\mathcal{K}} p^*(\M x) \ln{{\varphi(\M \lambda)}}d\M x - \sum_{\alpha} \lambda_{\alpha} \bar{\M x}_{\alpha} \\
    =& -\ln{\varphi(\M \lambda)} - \sum_{\alpha} \lambda_{\alpha} \bar{\M x}_{\alpha} =: -\Gamma(\M \lambda)
\end{aligned}
\end{equation}
We note that the coefficients $\lambda_{\alpha_0}$ corresponds to the normalization constraints can be extracted from $\varphi(\M \lambda)$ and we have:
\begin{equation}
\begin{aligned}
    \Gamma(\M \lambda) &= \ln{\int_{\mathcal{K}} \exp{(-\sum_{\alpha \in \mathbb{N}_r^n/\{\alpha_0\} }\lambda_\alpha \M x^\alpha )}\exp{(-\lambda_0)} d\M x} + (\lambda_0 +\sum_{\alpha \in \mathbb{N}_r^n/\{\alpha_0\} }\lambda_\alpha \bar{\M x}_\alpha ) \\
    &=\ln{\int_{\mathcal{K}} \exp{(-\sum_{\alpha \in \mathbb{N}_r^n/\{\alpha_0\}d\M x }\lambda_\alpha \M x^\alpha )}d\M x} + \sum_{\alpha \in \mathbb{N}_r^n/\{\alpha_0\} }\lambda_\alpha \bar{\M x}_\alpha 
\end{aligned}
\end{equation}
Given the fact that $p(\M x)$ is normalized at the stationary point:
\begin{equation}
    \int_{\mathcal{K}} \exp{(-\sum_{\alpha \in \mathbb{N}_r^n/\{\alpha_0\} }\lambda_\alpha \M x^\alpha )}\exp{(-\lambda_0)} d\M x = 1,
\end{equation}
we have the un-normalized thermal dynamics potential: 
\begin{equation}
    \Delta(\M \lambda) = (\int_{\mathcal{K}}\exp{(-\sum_{\alpha \in \mathbb{N}_r^n} \lambda_\alpha \M x^\alpha)}d\M x - 1) + \sum_{\alpha \in \mathbb{N}_r^n} \lambda_\alpha \bar{\M x}_{\alpha}.
\end{equation}
The fact that $\Delta(\M \lambda) = \Gamma(\M \lambda)$ can be verified by extract the term $\bar{\M x}_{\alpha_0} = 1$ and $\lambda_0 = \ln{\int_{\mathcal{K}} \exp{(-\sum_{\alpha \in \mathbb{N}_r^n/\{\alpha_0\}d\M x }\lambda_\alpha \M x^\alpha )}d\M x}$.

\section{Recover Distribution on Quotient Ring}
\label{appx:potential-quotient}
Now, we consider nonempty $\mathcal{K}$ that introduces an ideal structure to the systems, which is common in robotics. Consider the ideal generated by the polynomial equality constraints $\M g(\M x)$:
\begin{equation}
    \mathcal{I}[\M g(\M x)] := \left\{ \sum_i q_i(\M x) g_i(\M x) | \forall q_i(\M x) \in \mathbb{R}[\M x] \right\}.
\end{equation}
The integration of any term in $\mathcal{I}[\M g(\M x)]$ will be zero, i.e, 
\begin{equation}
    \int_{\mathcal{K}} p(\M x)\sum_i q_i(\M x)g_i(\M x) d\M x = 0, \forall \alpha,
\end{equation}
which can be checked by the fact that $g_i(\M x) \equiv 0, \forall \M x \in \mathcal{K}$. 
Thus, we need to incorporate these constraints when recovering $\M \lambda$ to avoid degenerate search direction or reducing the systems to the quotient ring $\mathbb{R}[\M x]/\mathcal{I}[\M g(\M x)]$. 

Now, we proceed to construct the ideal from the canonical basis $\M \phi_r(\M x)$ by linear algebra techniques. Consider the elements of $\mathbb{R}[\M x]/\mathcal{I}[\M g(\M x)]$ with finite degree $r$; each element can be expressed as a linear combination of the terms in the set:
\begin{equation}
    \mathcal{B} = \{ \M x^{\alpha}g_i(\M x) |  \deg \M x^{\alpha}g_i(\M x) \le r, \forall i, \forall \alpha \}.
\end{equation}
Let the vector of all elements in $\mathcal{B}$ as $\M {b}(\M x)$, we can express $\M b(\M x)$ as linear combination of $\M \phi_r(\M x)$, i.e,
\begin{equation}
    \M b(\M x) = \M B \M \phi_r(\M x). 
\end{equation}
Via QR decomposition of $\M B^{\transpose}$, i.e,
\begin{equation}
    \M B^{\transpose} = \begin{bmatrix}
        \M Q & \M Q_{\perp}
    \end{bmatrix} \begin{bmatrix}
        \M R \\
        \M 0
    \end{bmatrix},
\end{equation}
we can extract the null space of $\M B$, i.e, the span of column space of $\M Q_{\perp}$, to formulate the basis of the quotient ring. The $\M R$ is an upper diagonal matrix. Then, we conclude that
\begin{equation}
    \M q(\M x) = \M Q^{\transpose}_{\perp} \M \phi_r(\M x)
\end{equation}
is a basis for the quotient ring. In this case, the max entropy distribution can be expressed in terms of $\M q(\M x)$:
\begin{equation}
    p(\M x) = \exp{\left( -\M\lambda_q^{\transpose}\M q(\M x) \right)},
\end{equation}
and then we can minimize the thermal dynamics potential on the quotient ring:
\begin{equation}
    \Delta(\M \lambda_q) = \int_{\mathcal{K}}  \exp{\left( -\M\lambda_q^{\transpose}\M q(\M x) \right)} + \sum_i \M \lambda_q \bar{\M q}.
\end{equation}
By the definition of $\M \lambda_q$ and $\M Q_{\perp}$, we have the equivalence relation of $\M \lambda$: 
\begin{equation}
    \M\lambda - \M Q_{\perp}\M \lambda_q \in \operatorname{colsp}(\M Q). 
\end{equation}
Thus, we can alternatively perform constrained optimization with moments on $\mathbb{R}[\M x]$ by incorporating equality constraints to avoid the degenerate search directions:
\begin{equation}
\label{eq:cvx-max-ent-cons}
\begin{aligned}
    \min_{\M \lambda} \quad &\Delta(\M \lambda) \\
       \mathrm{s.t.}\quad  &  \M Q^{\transpose} \M \lambda = 0.
\end{aligned}
\end{equation}
As the constraints are linear, the programming remains convex. When $\M \lambda$ is fixed, the gradients and Hessian can be computed by numerical integration and used to determine the optimal search direction for the optimal coefficients. As the convex optimization is not agnostic to the initial guess, we can initialize $\M \lambda$ by an \MCMED with finite integrals over $\mathcal{K}$, for example, the Gaussian distribution. 

\section{Example of Extended Polynomial System and Its Affine Form}
\label{appx:momcond_linear}
We provide an example of the extended polynomial system and transform it to the affine form as in \eqref{eq:ext-polyDynSys}. Consider a linear measurement model with state $\M x \in \mathbb{R}^2$, measurements $\M y$, and measurement noise $\M w$:
\begin{equation}
\begin{aligned}
    y_1 - x_1 &= w_1, \\
    y_2 - x_2 &= w_2.
\end{aligned}
\end{equation}
The extended system at $r=2$ is: 
\begin{equation}
\M \phi_r(\M y - \M x) =  \M \phi_r(\M w)
\end{equation}
where
\begin{equation}
\begin{aligned}
     \M \phi_r(\M y - \M x) = \begin{bmatrix}
            1 \\ 
            y_1 - x_1\\
            y_2 - x_2\\
            (y_1-x_1)(y_2-x_2)\\
            (y_1-x_1)^2\\
            (y_2-x_2)^2
        \end{bmatrix} = \begin{bmatrix}
            1 \\
            y_1 - x_1\\
            y_2 - x_2\\
            y_1y_2 + x_1x_2 - y_1x_2 - y_2x_1\\
            y_1^2  + x_1^2 - 2y_1x_1\\
            y_2^2  + x_2^2 - 2y_2x_2
        \end{bmatrix}, \M \phi_r(\M w) = \begin{bmatrix}
            1\\w_1\\w_2\\w_1w_2\\w_1^2\\w_2^2
        \end{bmatrix}
\end{aligned}
\end{equation}


We then have the extended system
\begin{equation}
    \M{A}(\M y)\M \phi_r(\M x) = \vv,
\end{equation}
where
\begin{equation}
\begin{aligned}
    \M v = \begin{bmatrix}
        1 \\
        w_1\\
        w_2\\
         w_1w_2\\
        w_1^2\\
        v^2_2\\
    \end{bmatrix}, \M \phi_r(\M x) = \begin{bmatrix}
       1\\ x_1\\x_2\\x_1x_2\\x_1^2\\x_2^2
        \end{bmatrix}, 
    \M A(\M y) = \begin{bmatrix}
        1&0&0&0&0&0 \\
        0&1&0&0&0&0\\
        0&1&0&0&0\\
        0&y_2&y_1&-1&0&0\\
        0&2y_1&0&0&-1&0\\
        0&0&2y_2&0&0&-1
    \end{bmatrix}.
\end{aligned}
\end{equation}

\section{More Numerical Results}

\subsection{Update Steps}
\label{appx:update-only}
We compare the update steps in the linear system \eqref{eq:lin-sys} using the noise model \eqref{eq:trig-noise} with different orders of moments constraints $r$ in \Cref{fig:update-trig}. We find that with higher moments, the density is more concentrated and converges faster to the ground truth. 

\begin{figure*}
    \centering
    \includegraphics[width=1\linewidth]{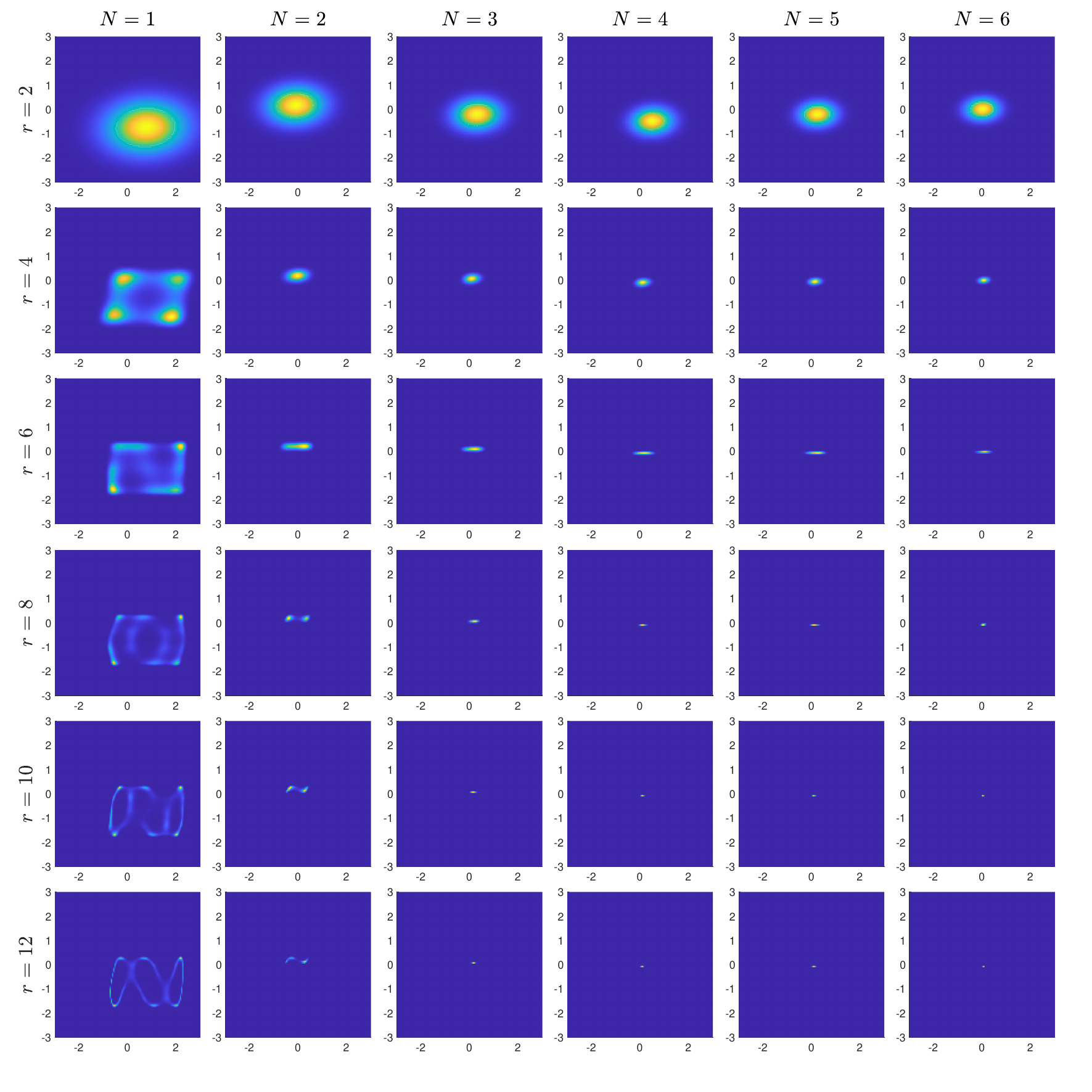}
    \caption{From the top to bottom, using noise model with moments with order up to $r = 2, 4, 6, 8, 10$ and $12$ are matched. As more measurements are accumulated, the belief converges to the true state. With larger $r$, the belief converges much faster to the Dirac measure.}
    \label{fig:update-trig}
\end{figure*}

\subsection{Prediction Steps}
\label{appx:prediction-only}
We only apply the prediction steps to the system on the matrix Lie group:
\begin{equation}
    \M Z_{k+1} = \M Z_k \M U_k \M W_k \in \mathrm{SE}(2),
\end{equation}
and compare the recovered distributions with Monte Carlo simulations. We find that with larger $r$, we can better recover the distribution. As marginalizing the distribution $p(\M x_k | \M \lambda_k^-)$ requires symbolic integration that is generally intractable, we apply the Langevin dynamics on $\mathrm{SE}(2)$ to sample and visualize the position part of \MCMED. The \MCMED is sampled by Langevin dynamics and visualized in $x-y$ plane in \Cref{fig:predict-only}.

\begin{figure*}
    \centering
    \includegraphics[width=1\linewidth]{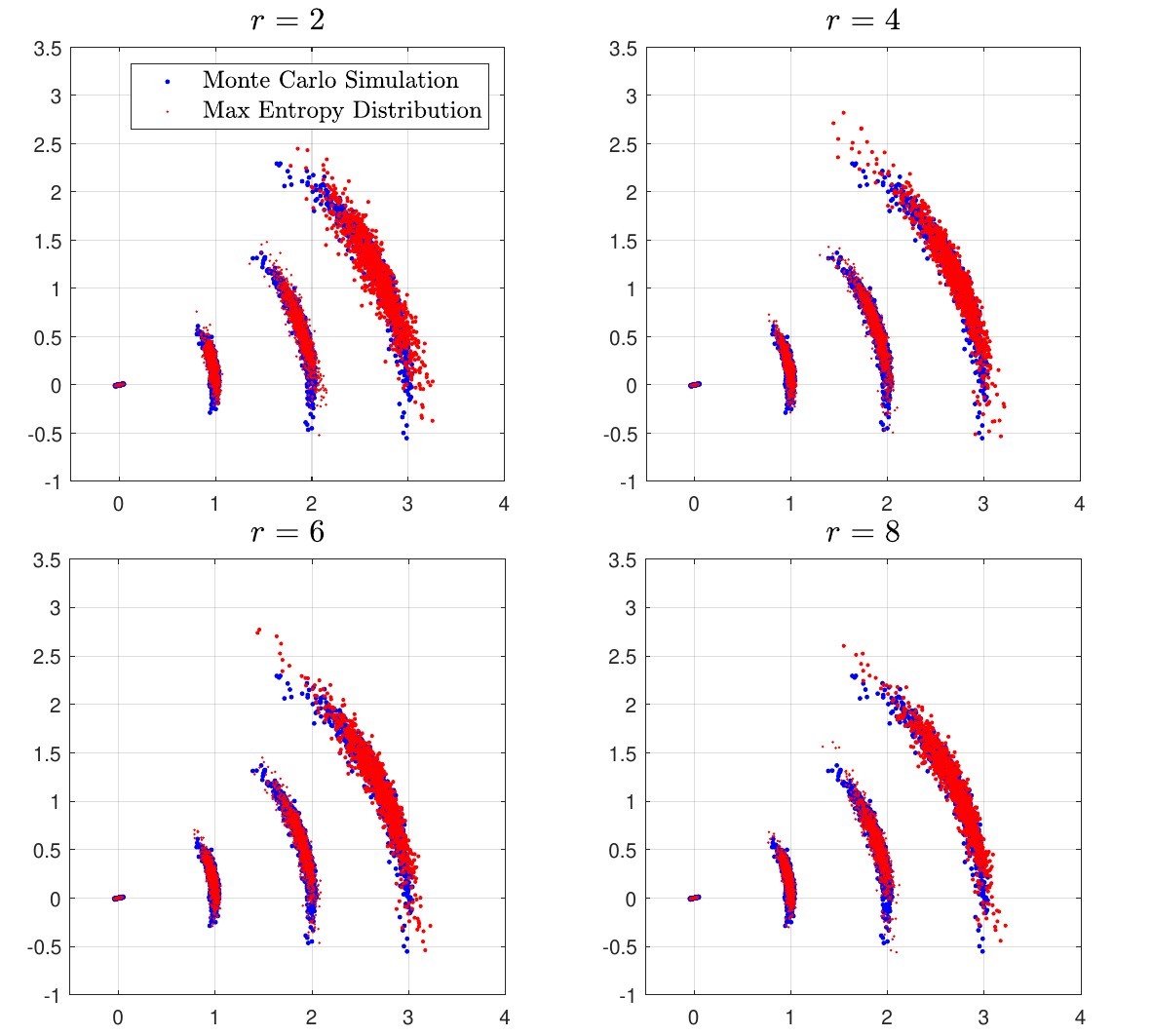}
    \caption{The propagation of the belief on $\mathrm{SE}(2)$ with only prediction steps. We visualize the position part in the $x-y$ plane. The blue dots represent the Monte Carlo simulations of the system dynamics, and the red dots represent the \MCMED sampled by Langevin dynamics using the log likelihood of the recovered distribution. We find that with $r=2$ the banana shape \cite{long2013banana} is not fully captured by the \MCMED. With larger $r$, the \MCMED better captures the underlying distribution, especially in the tail of the banana shape.}
    \label{fig:predict-only}
\end{figure*}

\subsection{Comparison with Particle Filter}
\label{appx:pf-cmpr}
We compare \GMKF with particle filters with $N=1e1, 1e2, 1e3, 1e4$ and $1e5$ particles. For both cases in \Cref{sec:localization}, we sample the max entropy association strategy to avoid overconfidence in the observation models. We consider the threshold for resampling as $0.5$ among all the trials. The estimated trajectories of all the cases are shown in \Cref{fig:pf-cmpr}. The box plot statistics for the position error is shown in \Cref{fig:gmkf-mismatch-noise}. 

We find that as more particles are considered, the particle filters converge to better solutions. In terms of numerical results, \GMKF has compatible solutions with a slightly larger median error than the particle filters with $N\ge 1e3$. However, we note that the \GMKF is a deterministic method that maintains a smooth analytical belief representation that is convenient for the extraction of global point estimation. Such a property is not possible for particle filters that are based on random sampling.

\begin{figure*}[t]
    \centering
    \includegraphics[width=1\linewidth]{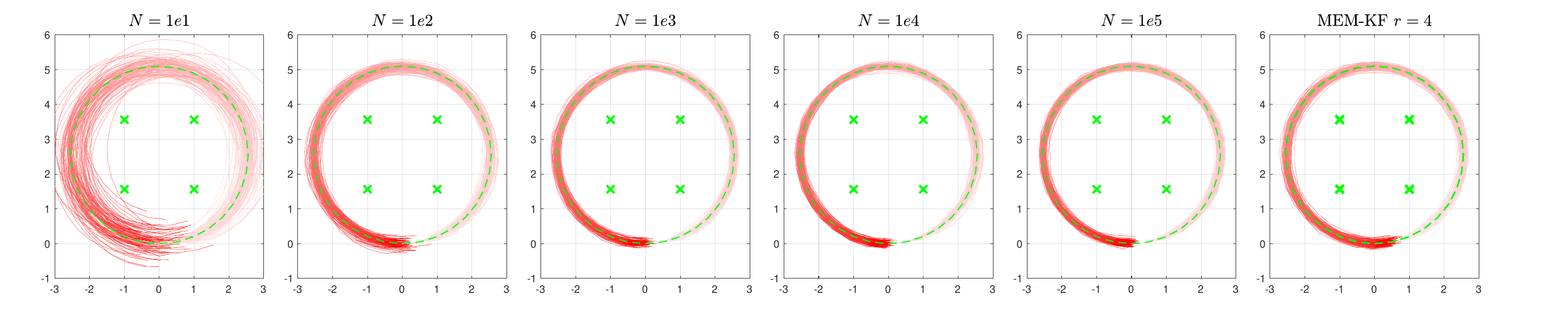}
    \includegraphics[width=1\linewidth]{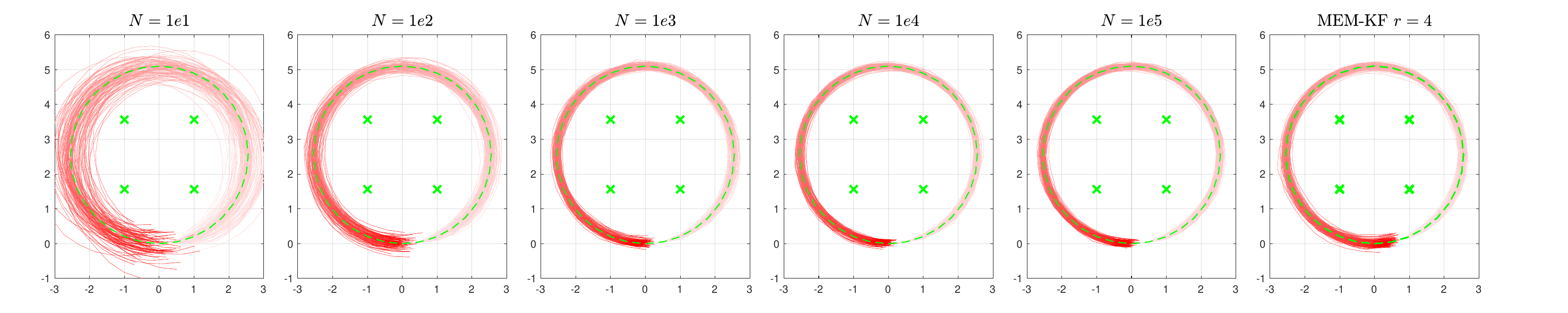}
    \caption{Comparison of particle filter with $N$ particles with \GMKF. In the first row, the underlying real association follows the same distribution in \eqref{eq:max-sensor}. In the second row, the measurement comes more from the landmarks at the lower left corner. We see that the particle filter can better track the trajectories with more particles. We find that the particle filter with a sufficient number of particles and the proposed \GMKF can estimate the trajectories with similar quality. }
    \label{fig:pf-cmpr}
\end{figure*}

\begin{figure*}[htbp]
    \centering
    \includegraphics[width=0.49\linewidth]{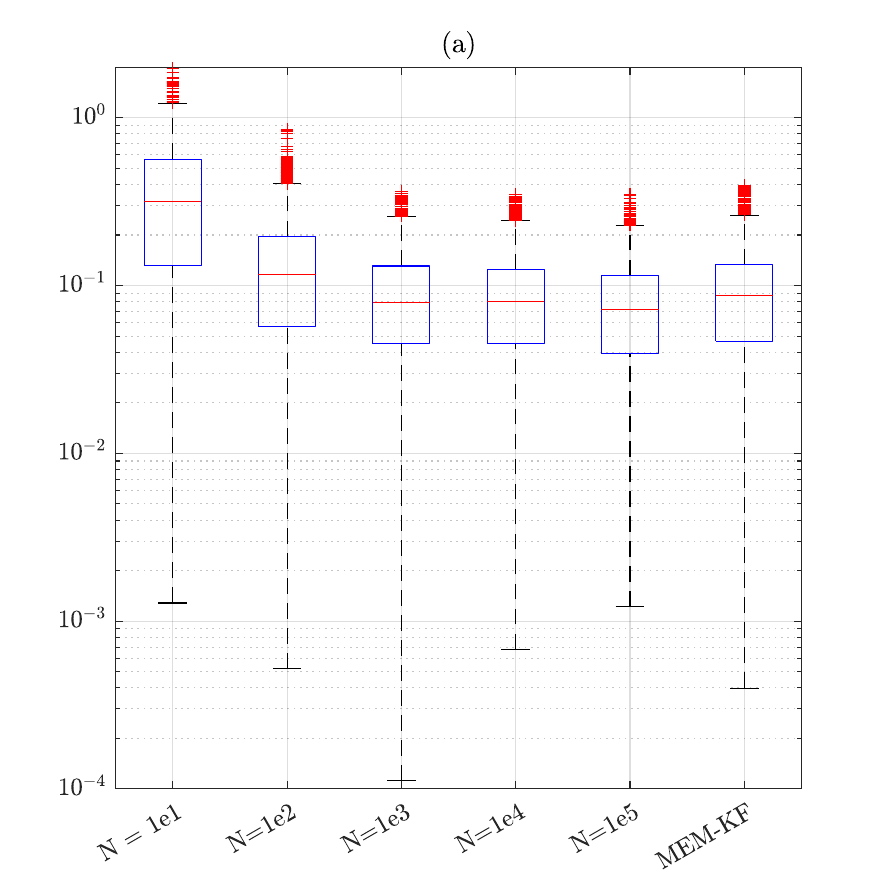}
    \includegraphics[width=0.49\linewidth]{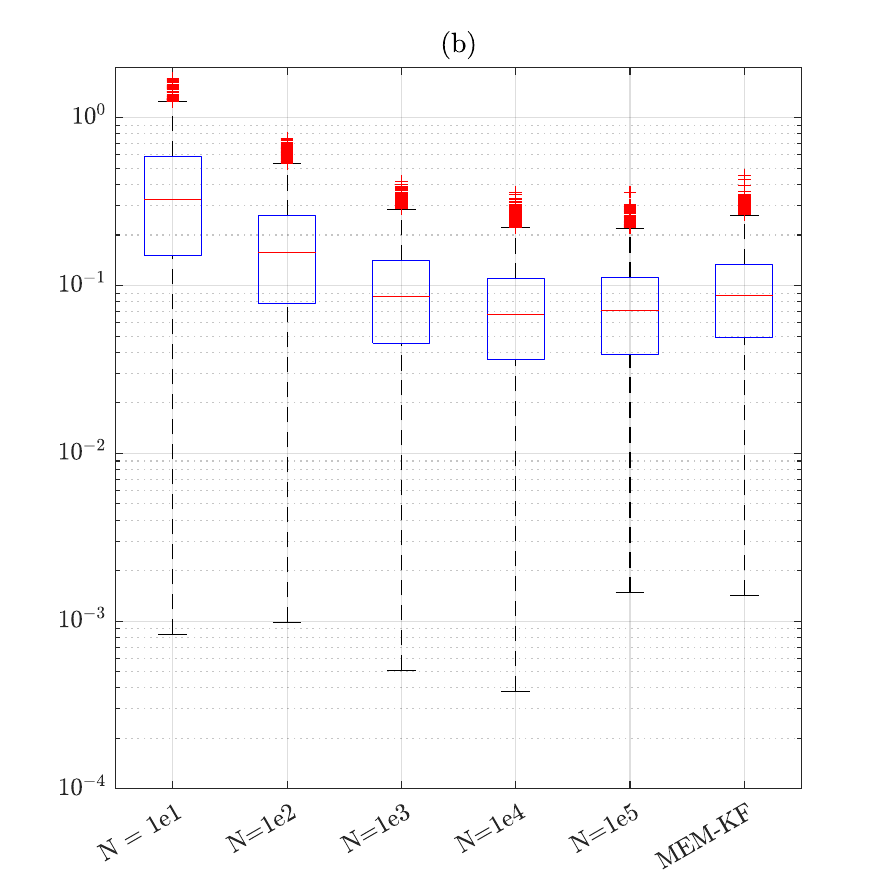}
    \caption{Location error for the localization tasks with unknown data association. (a)The underlying real association follows the same distribution in \eqref{eq:max-sensor}. (b)The measurement comes more from the landmarks at the lower left corner. We compare the $2$-norm between the estimated position and the ground truth for all 100 trajectories. We compared the particle filters with $N$ particles with the proposed \GMKF with polynomial order up to $r=4$. We find that the particle filter with a sufficient number of particles and the proposed \GMKF can estimate the trajectories with similar quality.}
    \label{fig:gmkf-mismatch-noise}
\end{figure*}

\end{appendices}

\clearpage
\bibliographystyle{unsrtnat}
\bibliography{bib/strings-full,bib/ieee-full,bib/references}

\clearpage
\clearpage

\end{document}